\newtheorem{theorem}{Theorem}
\newtheorem{lemma}{Lemma}
\DeclareMathOperator*{\argmax}{arg\,max}
\title{Efficient Decompositional Rule Extraction for Deep Neural Networks}
\author{%
  Mateo~Espinosa~Zarlenga\thanks{Department of Computer Science and Technology}\\
  University of Cambridge\\
  Cambridge, UK \\
  \texttt{me466@cam.ac.uk} \\
  \And
  Zohreh~Shams\footnotemark[2]\\
  University of Cambridge\\
  Cambridge, UK \\
  \texttt{zs315@cam.ac.uk} \\
  \And
  Mateja~Jamnik\footnotemark[2]\\
  University of Cambridge\\
  Cambridge, UK \\
  \texttt{mateja.jamnik@cl.cam.ac.uk} \\
}
\begin{document}

\maketitle

\begin{abstract}

In recent years, there has been significant work on increasing both interpretability and debuggability of a Deep Neural Network (DNN) by extracting a rule-based model that approximates its decision boundary.
Nevertheless, current DNN rule extraction methods that consider a DNN's latent space when extracting rules, known as decompositional algorithms, are either restricted to single-layer DNNs or intractable as the size of the DNN or data grows. In this paper, we address these limitations by introducing ECLAIRE, a novel polynomial-time rule extraction algorithm capable of scaling to both large DNN architectures and large training datasets. We evaluate ECLAIRE on a wide variety of tasks, ranging from breast cancer prognosis to particle detection, and show that it consistently extracts more accurate and comprehensible rule sets than the current state-of-the-art methods while using orders of magnitude less computational resources. We make all of our methods available, including a rule set visualisation interface, through the open-source REMIX library (\href{https://github.com/mateoespinosa/remix}{\color{blue}{https://github.com/mateoespinosa/remix}}). 
\end{abstract}

\setcounter{footnote}{0}
\section{Introduction}

As the field of Artificial Intelligence (AI) transitions from traditional symbolic methods into opaque models like Deep Neural Networks (DNNs)~\cite{ai_index_report_2017}, there has been a great amount of concern over the potential consequences of using these methods in safety-critical environments~\cite{xai_survey_trends_and_trajectories, xai_survey_2021}.
The use of such ``black-box'' models in practice can not only lead to uncertainty and delays when debugging and root-causing mistakes in a model's behaviour (as in Uber's infamous accident \cite{self_driving_cars_accident, uber_self_driving_crash}), but it also goes against the long-held conventions of using transparent, interpretable models for life threatening decisions \cite{xai_requirement_american_cancer_criteria, symbolic_meta_learning}. Furthermore, the lack of transparency in DNNs has been shown to lead to user mistrust~\cite{ways_explanations_impact_mental_models}
and to models that misleadingly show promising results for the wrong reasons~\cite{data_leakage_cancer_resolution}.
Such opacity perpetuates a barrier for the practical use of DNNs despite their celebrated success in tasks ranging from computer vision~\cite{resnet}
and language modelling~\cite{bert}
to diagnosis~\cite{retina}.

As a reaction to these concerns, a new line of work in the field of eXplainable AI (XAI) has been developed to explore ways of demystifying a DNN's decision boundary and to facilitate debugging and deployment of these models. A promising proposal in this direction is to make use of the latent space learnt by a DNN to construct a rule-based model (a set or list of IF-THEN rules) that approximates its decision boundary \cite{rule_extraction_survey_2020, rule_extraction_survey_2016}. Such approaches complement a DNN's high performance by offering transparency and debuggability in the form of both ``\textit{local}'' interpretability (i.e., understanding how a single prediction was made) as well as ``\textit{global}'' interpretability (i.e., discovering global patterns in the data that explain how the DNN acts). This is in contrast to existing XAI alternatives, such as feature importance methods \cite{grad_cam, shap}, sample importance methods \cite{sample_based_criticism, sample_based_influence_functions} and counterfactual explanations \cite{counterfactual_wachter, counterfactual_dandl}, where a method only provides either local or global interpretability in their basic form. Moreover, aided by publicly available rule set visualisation tools \cite{viz_rule_bender, viz_rule_matrix}, as well as the possibility of injecting expert hand-crafted rules into rule sets (e.g., as suggested in \cite{rem}), existing work and open-source frameworks for deploying rule-based models can significantly assist model developers, legal moderators, and model users with debugging and deploying a DNN by analysing its approximating rule-based model.

Nevertheless, previous rule extraction methods that consider the inner workings of a DNN to construct their rules, commonly referred to as ``decompositional'' methods \cite{rule_extraction_taxonomy}, tend to have severe limitations. First, with very few exceptions, most decompositional algorithms have been designed to work only on shallow architectures \cite{rule_extraction_survey_2016, rule_extraction_survey_2020}. Second, to the best of our knowledge, all of the approaches that are able to operate on non-trivial DNNs suffer from intractability issues and, therefore, are impractical for tasks that require very deep architectures or have a large training dataset \cite{deepred, rem}. Both of these are in stark contrast with the current direction in DNN architecture design where deep models trained on large datasets dominate over shallow models trained with small datasets \cite{dnn_architecture_survey}.

In this paper we address these limitations by proposing ECLAIRE, a novel decompositional rule extraction algorithm for DNNs. Our method, depicted in Figure~\ref{fig:eclaire_overview}, avoids the intractability common to other decompositional methods by making use of a two-step process that efficiently uses a DNN's latent space to guide its rule extraction. Our main contributions can be summarised as follows:
\begin{itemize}
    \item We introduce ECLAIRE, a novel decompositional rule extraction algorithm which, to the best of our knowledge, is the first polynomial-time decompositional method applicable to arbitrary DNNs.
    
    \item We show that ECLAIRE's rule sets outperform those extracted by current state-of-the-art decompositional methods in both predictive performance and comprehensibility (measured as the number and length of extracted rules). Furthermore, we show that ECLAIRE is able to achieve this while using significantly less computational resources than competing methods. 
    
    \item We open-source our methods, together with a rule visualisation and interaction library, as part of the REMIX package (\href{https://github.com/mateoespinosa/remix}{\color{blue}{https://github.com/mateoespinosa/remix}}).
\end{itemize}

\begin{figure}[!htbp]
    \centering
    \includegraphics[width=0.90\textwidth]{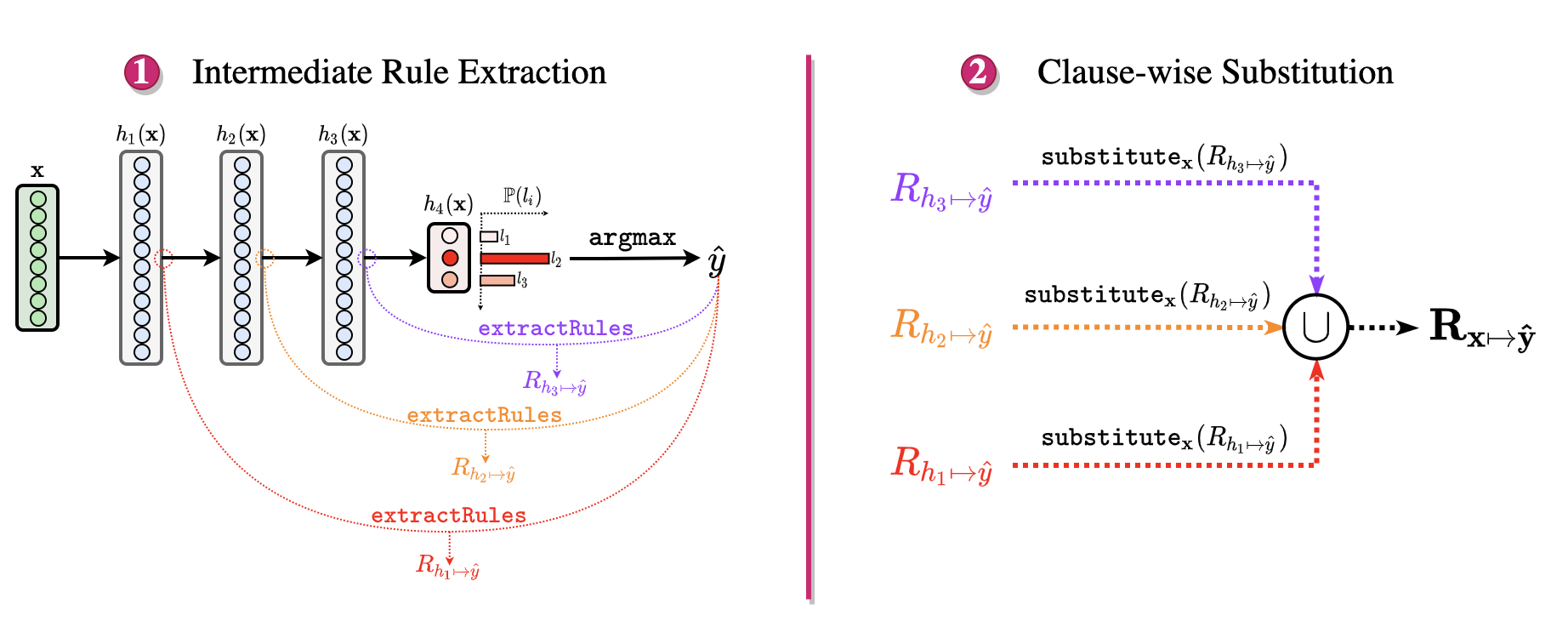}
    \caption{Representation of how ECLAIRE extracts rules from a DNN that outputs a probability distribution over classes $\{l_1, l_2, l_3\}$. ECLAIRE uses a 2-step process in which it (i) extracts rules mapping intermediate (layer-wise) activations to the DNN's predictions and then (ii) performs a clause-wise substitution to produce a rule set $\mathbf{R_{x \mapsto \hat{y}}}$ mapping input activations to output predictions.}
    \label{fig:eclaire_overview}
\end{figure}

\section{Related Work}

Previous work on explaining a DNN's prediction using rule-based explanations can be divided into two lines of work: (i) rule-based model-agnostic local explainability methods and (2) DNN rule extraction methods. While members of the former family of methods, which include algorithms such as LORE~\cite{lore} and Anchors~\cite{anchors}, are able to generate rule-based explanations for the behaviour of a DNN in a point's local neighbourhood, they are unable to provide insights on any \textit{global} patterns captured by the DNN's decision boundary. Hence, given that we are interested in extracting rule-based models that provide both local and global interpretability in a DNN, in this section we focus on DNN rule extraction methods.

The design of DNN rule extraction methods can be divided following a taxonomy proposed by Andrews et al. \cite{rule_extraction_taxonomy} in which a method is classified as being in one of three possible categories: ``\textit{pedagogical}'', ``\textit{decompositional}'', and ``\textit{eclectic}''. \textit{Pedagogical} methods, such as
HYPINV \cite{ped_hypinv},
TREPAN \cite{ped_sampling_dnn},
and Validity Interval Analysis \cite{ped_validity_interval_analysis} extract rules from a DNN by treating it as a black box. A limitation of these methods, however, is that by ignoring a DNN's latent space, they do not take advantage of different hierarchical representations that a DNN was able to learn from its training set. In this work we are concerned with extracting rules from generic multi-layered architectures, and thus in this section we focus on discussing only those decompositional methods that are able to extract rules from arbitrary DNNs. This focus renders the majority of decompositional approaches, such as those in \cite{kt_algorithm, kt_algorithm_generalized, subset_method, rulex_1, rulex_2}, out of scope for our current work due to their restrictive application to only a subset of architectures. Instead, we bring attention to the only two decompositional algorithms applicable to arbitrary DNNs: DeepRED \cite{deepred} and, an optimised version of it, REM-D \cite{rem}.

Borrowing inspiration from CRED \cite{cred}, an earlier single-layer decompositional method, both DeepRED and REM-D use decision trees to induce rule sets that map input features to the truth value of each of the DNN's output classes. The crux of both methods lies in their substitution step in which a set of rules $R_{h_i \mapsto l}$ mapping activations in the $i^\text{th}$ layer to the truth value of class $l$ is rewritten to be a function of activations in the $(i - 1)^\text{th}$ layer. This substitution is performed in a term-wise manner: each term of a rule's premise in $R_{h_i \mapsto l}$ is replaced by a set of new rules which depend on the hidden activations of layer $(i - 1)$. This substitution, however, requires an exponential post-processing step in which rules substituted for a given term need to be distributed in a cartesian-product fashion with rules that were substituted for all terms in the same premise. This results in both REM-D and DeepRED having an exponential asymptotic runtime \cite{rem}.

While algorithmically similar, these methods differ in two implementation-wise optimisations: (1) REM-D uses C5.0 \cite{dt_c5.0} for intermediate decision tree construction rather than C4.5 \cite{dt_c4.5} (a less efficient iteration of C5.0 with typically worse performance) and (2) REM-D reduces memory consumption by substituting intermediate rules as soon as they have been extracted. REM-D's use of C5.0 for intermediate rule extraction, and its early substitution, allow it to empirically outperform DeepRED in terms of its scalability as well as in the size and fidelity of its rule sets \cite{rem_dissertation}.

The last family of rule extraction methods are \textit{eclectic} algorithms, a hybrid between pedagogical and decompositional methods. While eclectic methods such as RX \cite{eclectic_rx}, MofN \cite{eclectic_mofn},
and ERENN\_MHL \cite{eclectic_erenn} have been shown to extract competitive approximating rule sets, these algorithms either assume that the underlying DNN has only one hidden layer or they require full control over the DNN's training pipeline (e.g., they require a specific training procedure to be used for the DNN).

\section{ECLAIRE}
\textbf{Problem Setup} \hspace{2pt} Assume we are given a set of unlabelled training samples $X = \{ \mathbf{x}^{(i)} \in \mathbb{R}^m \}_{i = 1}^N$ and a pre-trained DNN $f_\theta: \mathbb{R}^m \mapsto [0, 1]^L$ such that for all $\mathbf{x} \in X$, $f_\theta(\mathbf{x})$ outputs a probability distribution over labels in set $Y = \{ l_1, l_2, \cdots, l_L \}$. Furthermore, assume $f_\theta$ has $d$ hidden layers and let $h_i(\mathbf{x})$ be the output of the $i$-th layer of $f_\theta$ when fed with input $\mathbf{x}$. In this setup, we let $h_0(\mathbf{x}) = \mathbf{x}$ be the input activation layer and $h_{d + 1}(\mathbf{x}) = f_\theta(\mathbf{x})$ be $f_\theta$'s output probability distribution. Our goal is to use samples in $X$ to construct a rule set $R_{\mathbf{x} \mapsto \hat{y}}$ of IF-THEN rules of the form
\[
    \texttt{IF } \Big( \underbrace{\overbrace{(x_i > v_i)}^\text{term} \wedge (x_j \le v_j) \wedge \cdots \wedge (x_n > v_n)}_{\text{premise/antecedent}} \Big) \texttt{ THEN } \underbrace{l_k}_{\text{conclusion}} 
\]
such that, when subjected to a majority voting\footnote{If a sample does not satisfy any rule premise, then we assign it a default label.} with input $\mathbf{x}$, it accurately predicts the result of evaluating $\argmax_i{f_\theta(\mathbf{x})_i}$. In alignment with existing decompositional methods (e.g., REM-D and DeepRED), in our work we constrain each rule's premise to be a conjunction of terms of the form $(x_i > v_i)$ or $(x_i \le v_i)$, where $x_i$ is the $i$-th input feature of sample $\mathbf{x}$ and $v_i \in \mathbb{R}$ is a learnt threshold.

\textbf{Efficient Clause-wise Rule Extraction} \hspace{2pt}
In order to introduce our method, we begin with the following observation: a neural network $f_\theta$ with $d$ hidden layers provides access to $d$ distinct representations for samples $\mathbf{x} \in X$. This allows us to extend our support training set $X$ by creating $d$ new labelled training sets $\big\{ \{\big( h_i(\mathbf{x}), \argmax_i f_\theta(\mathbf{x})_i \big) \; | \; \mathbf{x} \in X\} \big\}_{i = 1}^d$ from which one can induce $d$ rule sets $R_{h_i \mapsto \hat{y}}$ that map each hidden layer's output to $f_\theta$'s label predictions $\hat{y} = \argmax_i f_\theta(\mathbf{x})_i$. The crux of our method consists of efficiently unifying all $d$ rule sets $\{R_{h_1 \mapsto \hat{y}},
\cdots, R_{h_d \mapsto \hat{y}} \}$ into a single rule set $R_{\mathbf{x} \mapsto \hat{y}}$ that maps input samples $\mathbf{x} \in X$ to $f_\theta$'s predicted labels. In the same way that bagging and feature subsampling are key to the ability of random forests to reduce variance and overfitting \cite{dt_random_forest}, making use of $d$ different representations of the same dataset to construct $R_{\mathbf{x} \mapsto \hat{y}}$ can significantly help reducing the variance and overfitting that would otherwise be present in vanilla rule induction algorithms.

Motivated by these observations, we introduce \textbf{E}fficient \textbf{CLA}use-w\textbf{I}se \textbf{R}ule \textbf{E}xtraction (ECLAIRE). Our method, summarised in Algorithm~\ref{alg:ECLAIRE}, uses an input DNN $f_\theta$ and a set of training samples $X$ to incrementally construct a rule set $R_{\mathbf{x} \mapsto \hat{y}}$ which approximates $f_\theta$ (line~$1$). More specifically, ECLAIRE iterates over each hidden layer $h_i$ and uses a general rule induction algorithm $\psi(\cdot)$
to induce a rule set $R_{h_i \mapsto \hat{y}}$ that maps $h_i$'s activations to the DNN's output predictions
(lines $2$ to $5$). ECLAIRE then proceeds to do a \emph{clause-wise substitution} of rules in $R_{h_i \mapsto \hat{y}}$. This substitution, depicted in Figure~\ref{fig:eclaire_substitution}, iterates over all of the rules in $R_{h_i \mapsto \hat{y}}$ (lines 6-7) and constructs a new temporary rule set $I_{\mathbf{x} \mapsto p}$ mapping input feature activations $\mathbf{x}$ to the truth value of each rule's premise $p$ (lines 8-9). Note that in Algorithm~\ref{alg:ECLAIRE} we use $p(\mathbf{x}^{(i)})$ to represent the result of evaluating premise $p$ (which is a conjunction of terms) on sample $\mathbf{x}^{(i)}$. Finally, we add to $R_{\mathbf{x} \mapsto \hat{y}}$ all of the premises in $I_{\mathbf{x} \mapsto p}$ that have $\texttt{TRUE}$ as a conclusion (lines 10-12). In other words, we add all the rules that approximate premise $p$ as being true and map those rules to the class which $p$ originally predicted. This process is repeated for all intermediate hidden layers and terminates by outputting $R_{\mathbf{x} \mapsto \hat{y}}$ as its result.

\begin{algorithm}[!htbp]
    \small
    \caption{\label{alg:ECLAIRE}ECLAIRE}
    \hspace*{\algorithmicindent} \textbf{Input:} DNN $f_\theta$ with layers $\{h_0, h_1, ..., h_{d+1}\}$ \\
    \hspace*{\algorithmicindent} \textbf{Input:} Training data $X = \{\mathbf{x}^{(1)}, \mathbf{x}^{(2)}, \dots, \mathbf{x}^{(N)}\}$ \\
    \hspace*{\algorithmicindent} \textbf{Hyperparameter:}  Rule extraction algorithm $\psi(\cdot)$ (e.g., rule induction on decision trees learnt via C5.0) \\
    \hspace*{\algorithmicindent} \textbf{Output:} Rule set $R_{\mathbf{x} \mapsto \hat{y}}$
    \begin{algorithmic}[1]
        \State $R_{\mathbf{x} \mapsto \hat{y}} \gets \emptyset$
        
        \State ${\hat{y}^{(1)}}$, ${\hat{y}^{(2)}}$, \dots, ${\hat{y}^{(N)}} \gets \texttt{argmax}\big(h_{d+1}(\mathbf{x}^{(1)})\big), \; \texttt{argmax}\big(h_{d+1}(\mathbf{x}^{(2)})\big), \; \dots, \; \texttt{argmax}\big(h_{d+1}(\mathbf{x}^{(N)})\big)$
        
        \For{hidden layer $i = 1, \dots, d $}
        
            \State ${\mathbf{x^\prime}^{(1)}}$, ${\mathbf{x^\prime}^{(2)}}$, \dots, $\mathbf{x^\prime}^{(N)} \gets h_{i}(\mathbf{x}^{(1)}), \; h_{i}(\mathbf{x}^{(2)}), \; \dots, \; h_{i}(\mathbf{x}^{(N)})$

            \State $R_{h_{i} \mapsto \hat{y}} \gets \psi\big(\{ (\mathbf{x^\prime}^{(1)}, {\hat{y}_1}), (\mathbf{x^\prime}^{(2)}, {\hat{y}_2}), ..., (\mathbf{x^\prime}^{(N)}, {\hat{y}_N})\} \big)$
        
            \For {rule $r \in R_{h_i \mapsto \hat{y}}$}
                \State $p \gets \texttt{getPremise}(r)$
                \State ${\hat{y}_{p}^{(1)}}, \; {\hat{y}_{p}^{(2)}}, \; \dots, \; {\hat{y}_{p}^{(N)}} \gets p\big( \mathbf{x^\prime}^{(1)} \big), p\big( \mathbf{x^\prime}^{(2)} \big), \; \dots, \; p\big( \mathbf{x^\prime}^{(N)} \big)$
                \State $\text{I}_{h_{0} \to p} \gets \psi\big( \{ (\mathbf{x}^{(1)}, {\hat{y}_{p}^{(1)}}), (\mathbf{x}^{(2)}, {\hat{y}_{p}^{(2)}}), ..., (\mathbf{x}^{(N)}, {\hat{y}_{p}^{(N)}}) \} \big)$
                \For {clause $c \in \texttt{getPremisesByConclusion}\big( I_{h_{0} \to p}, \texttt{TRUE} \big)$}
                    \State $R_{\mathbf{x} \mapsto \hat{y}} \gets R_{\mathbf{x} \mapsto \hat{y}} \cup \Big\{ \texttt{IF } c \texttt{ THEN } \texttt{getConclusion}(r) \Big\}$
                \EndFor
            \EndFor
        \EndFor
    \State \textbf{return} $R_{\mathbf{x} \mapsto \hat{y}}$
    \Statex
    \end{algorithmic}
\end{algorithm}

When using a polynomial-time rule extractor $\psi(\cdot)$, ECLAIRE's clause-wise substitution allows it to avoid the expensive and exponential term redistribution that is required in REM-D and DeepRED. This manifests itself in ECLAIRE's runtime which can be shown to be as follows:

\begin{theorem}[ECLAIRE Runtime Complexity]
\label{theorem:eclaire-runtime}
Assume ECLAIRE's intermediate rule extraction algorithm $\psi(\cdot)$ operates by inducing rules from a decision tree that was learnt using a top-down impurity-based algorithm (e.g., C5.0). Furthermore, assume that, when trained with $N$ $m$-dimensional samples, $\psi(\cdot)$'s runtime grows as $\mathcal{O}(N^{p_n} m^{p_m})$, for some $p_n, p_m \in \mathbb{N}$. Given a training set with $N$ samples and a neural network with $d$ hidden layers, such that there are at most $m$ activations in any of its layers, ECLAIRE's runtime will grow as a function of $\mathcal{O}\Big( d N^{\text{max}(3, p_n + 1)} m^{p_m} \Big)$.
\end{theorem}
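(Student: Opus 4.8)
The plan is to bound the running time by walking through Algorithm~\ref{alg:ECLAIRE} loop by loop, charging each line its cost as a function of $N$ and $m$, and then multiplying the per-layer cost by the $d$ iterations of the outer loop. Before doing so, I would establish one structural fact that does all the heavy lifting: since $\psi(\cdot)$ returns the rules induced from a decision tree built by a top-down impurity-based procedure on $N$ samples, every leaf of that tree retains at least one training sample, so the tree has at most $N$ leaves and hence $\psi$ emits $\mathcal{O}(N)$ rules. Similarly, the depth of any such tree is $\mathcal{O}(N)$, so each rule premise (respectively, each clause of $I_{h_0 \to p}$) contains $\mathcal{O}(N)$ terms. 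These two bounds, both consequences solely of the top-down construction assumption, are what turn the nested loops into polynomials in $N$.

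With that in hand I would analyse the body of the outer loop. Computing the layer activations (line~4) and the single invocation of $\psi$ in line~5 cost $\mathcal{O}(N^{p_n} m^{p_m})$ by hypothesis (the forward passes needed for the activations are polynomial in $N$, $m$, and $d$, and are dominated). The rule set $R_{h_i \mapsto \hat{y}}$ has $\mathcal{O}(N)$ rules, so the inner loop (lines~6--12) executes $\mathcal{O}(N)$ times. The dominant arithmetic inside it is the second call to $\psi$ in line~9, which trains on the $N$ input samples of dimension $m$ and therefore costs $\mathcal{O}(N^{p_n} m^{p_m})$; summed over the $\mathcal{O}(N)$ rules this contributes $\mathcal{O}(N^{p_n + 1} m^{p_m})$ per layer.

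The remaining, $\psi$-independent, bookkeeping is what produces the cubic term. Evaluating a premise on all $N$ samples (line~8) costs $\mathcal{O}(N \cdot |p|) = \mathcal{O}(N^2)$, and appending the $\mathcal{O}(N)$ surviving clauses of $I_{h_0 \to p}$, each of length $\mathcal{O}(N)$, to $R_{\mathbf{x} \mapsto \hat{y}}$ (lines~10--12) costs $\mathcal{O}(N^2)$; over the $\mathcal{O}(N)$ rules of the inner loop this is $\mathcal{O}(N^3)$ per layer. Adding the two contributions, the per-layer cost is $\mathcal{O}(N^{p_n+1} m^{p_m} + N^3)$. Since $m \ge 1$ gives $N^3 \le N^{\max(3, p_n+1)} m^{p_m}$ and likewise $N^{p_n+1} m^{p_m} \le N^{\max(3, p_n+1)} m^{p_m}$, each layer costs $\mathcal{O}(N^{\max(3, p_n+1)} m^{p_m})$, and multiplying by the $d$ layers of the outer loop yields the claimed $\mathcal{O}\big(d N^{\max(3, p_n+1)} m^{p_m}\big)$.

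The step I expect to be the crux is the combinatorial accounting rather than the arithmetic: I must justify the $\mathcal{O}(N)$ bound on the number of emitted rules and the $\mathcal{O}(N)$ bound on premise/clause length directly from the top-down impurity-based assumption (an adversarially unbalanced tree realises the worst case and is where the $N^3$ originates), and I must verify that the forward-pass and activation-extraction costs are genuinely subsumed so they do not alter the exponent. Getting the $\max$ correct then amounts to the observation that the cubic cost is incurred by rule manipulation alone and is independent of $\psi$, whereas the repeated training of $\psi$ is the only source of the $N^{p_n+1} m^{p_m}$ term.
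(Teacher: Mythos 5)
Your proposal is correct and follows essentially the same route as the paper's own proof: you establish the same two structural facts (a top-down tree on $N$ samples has $\mathcal{O}(N)$ leaves, hence $\psi$ emits $\mathcal{O}(N)$ rules each with $\mathcal{O}(N)$ terms, which are the paper's Lemmas~\ref{lemma:dt_num_terms} and~\ref{lemma:dt_num_rules}), and you charge the per-layer cost identically — one $\psi$ call, plus per rule an $\mathcal{O}(N^2)$ premise evaluation, an $\mathcal{O}(N^{p_n} m^{p_m})$ $\psi$ call, and an $\mathcal{O}(N^2)$ rule-appending step — before taking the same $\max$ and multiplying by $d$. The only cosmetic difference is that you explicitly note the forward-pass costs are dominated, which the paper leaves implicit.
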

Theorem~\ref{theorem:eclaire-runtime}, whose proof is in Appendix~\ref{appendix:bound_proof}, highlights a significant computational advantage in ECLAIRE over previous decompositional methods: while both REM-D and DeepRED are known to have an exponential runtime with respect to the training set and the depth of the network \cite{rem},
ECLAIRE exhibits a polynomial-time growth with respect to its input size when using a polynomial-time rule induction method for $\psi(\cdot)$. Furthermore, ECLAIRE's use of a DNN's intermediate representations independently of their topological order in the network implies that, in stark contrast to REM-D and DeepRED, it is agnostic to the DNN's network topology and it can be easily \textit{parallelised}, something we take advantage of in practice by distributing the work in the main loop (lines 3-15) across $n_\text{threads}$ threads. Notice that this is not possible in both REM-D and DeepRED as they extract rules in a sequential manner, processing the DNN's layers in reverse topological order.

\begin{figure}[!htbp]
    \centering
    \includegraphics[width=0.45\textwidth]{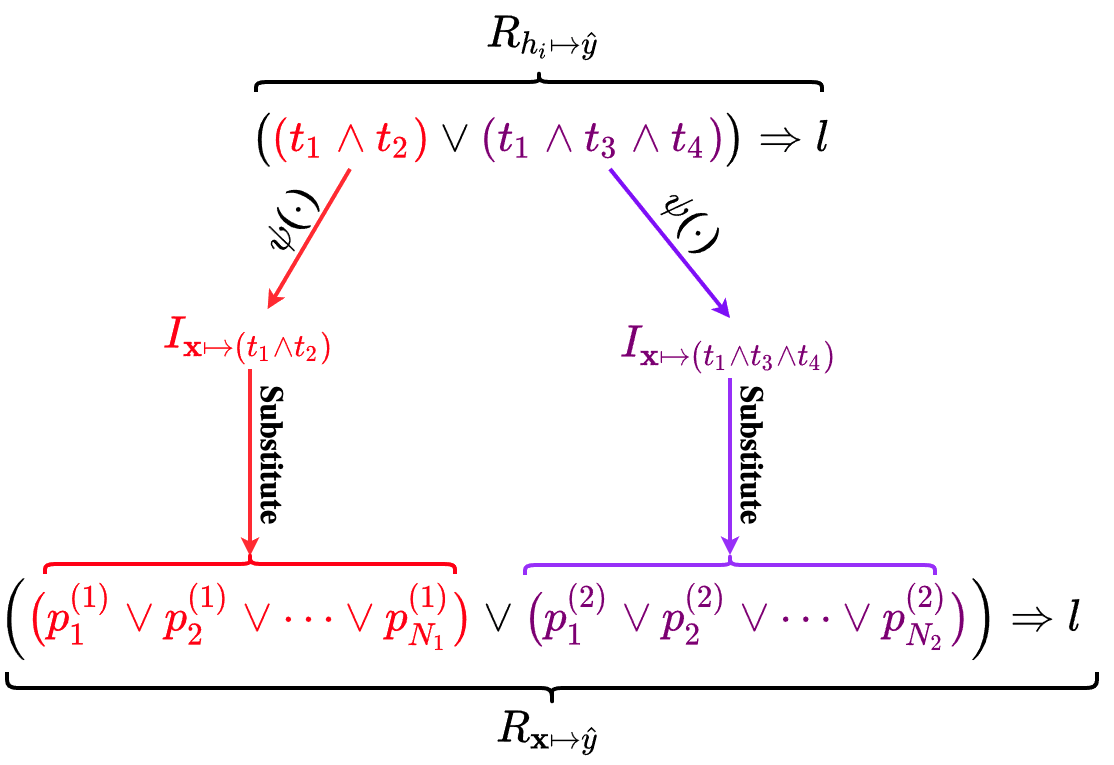}
    \caption{ECLAIRE's substitution step when intermediate rule set $R_{h_i \mapsto \hat{y}}$ has rules $r_1 := (t_1 \wedge t_2) \Rightarrow l$ and $r_2 := (t_1 \wedge t_3 \wedge t_4) \Rightarrow l$. We use $\{t_1, \cdots, t_4\}$ to represent different terms and we let $p_j^{(k)}$ be the premise of the $j$-th rule in $I_{\mathbf{x} \mapsto \texttt{getPremise}(r_k)}$ whose consequence is $\texttt{TRUE}$. In this example we assume that the number of premises with $\texttt{TRUE}$ as a conclusion in $I_{\mathbf{x}\mapsto (t_1 \wedge t_2)}$ and $I_{\mathbf{x}\mapsto (t_1 \wedge t_3 \wedge t_4)}$ is $N_1$ and $N_2$, respectively.
    This results in rule set $R_{\mathbf{x} \mapsto \hat{y}}$ having $N_1 +  N_2$ rules after substitution.
    }
    \label{fig:eclaire_substitution}
\end{figure}

\section{Experiments}
\label{sec:experiments}

\textbf{Setup} \hspace{2pt} For all experiments
we perform a 5-fold stratified cross-validation and, for each metric of interest, we show both the mean and standard deviation errors across all $5$ folds. Furthermore,
all experiments are carried on a $2016$ MacBook Pro with a 2.7 GHz Quad-Core Intel i7 processor and $16$ GB of RAM. Finally, we limit each individual experiment to span at most $n_\text{threads} = 6$ threads when 
using ECLAIRE
and we terminate an experiment if it takes longer than six hours to complete.

\textbf{Datasets and Tasks} \hspace{2pt} We evaluate our methods on six tasks. We begin by using a synthetic task to contrast ECLAIRE's rule sets to those extracted by the current state-of-the-art methods in a controlled setting. We then evaluate ECLAIRE on two breast cancer prognosis tasks based on the METRABRIC \cite{dataset_metabric} dataset to highlight ECLAIRE's applicability to real-world tasks in which transparency is crucial. We proceed to evaluate ECLAIRE's scalability to large datasets and architectures by extracting rules from models trained on two large particle physics datasets (MAGIC \cite{dataset_magic, dataset_uci_ml_repository} and MiniBooNE \cite{dataset_miniboone_experiment, dataset_uci_ml_repository}). Finally, we use the Letter Recognition \cite{dataset_letter, dataset_uci_ml_repository} dataset to explore ECLAIRE's ability to extend to non-binary classification tasks. A summary of each dataset, including its size and label distribution, can be found in Appendix~\ref{appendix:dataset_details}.

\textbf{Baselines and Metrics} \hspace{2pt} In our experiments, we compare ECLAIRE against the only other two decompositional methods that share its applicability: REM-D and DeepRED. Due to a lack of an up-to-date implementation of DeepRED, we follow the work in \cite{rem_dissertation} and use a variation of DeepRED where C5.0 is used instead of C4.5. This results in a variant of DeepRED that outperforms its C4.5 counterpart. For both this implementation of DeepRED, which we refer to as DeepRED\textsuperscript{*}, and the implementation of REM-D, we use as our starting point the code published by Shams et al. \cite{rem, rem_dissertation}. When evaluating ECLAIRE, we use rule sets induced from C5.0 trees as our intermediate rule extractor $\psi(\cdot)$ unless specified otherwise. Although this constraints our evaluation to a single rule extractor (rather than using more recent rule extractors such as Bayesian rule sets~\cite{bayesian_rule_sets} or Column Generation~\cite{column_generation}), this allows us to fairly evaluate the algorithmic design of ECLAIRE against that of DeepRED\textsuperscript{*} and REM-D without strongly biasing our results to our choice of intermediate rule extractor. Nevertheless, notice that from Theorem~\ref{theorem:eclaire-runtime}, and the fact that C5.0's runtime complexity grows as $\mathcal{O}(N^2 m)$ \cite{dt_c5.0}, it follows that the runtime complexity of this instantiation of ECLAIRE grows as $\mathcal{O}(d N^3 m)$. Finally, to highlight the benefits of using a DNN's latent space for rule extraction, we include two non-decompositional baselines: \textit{C5.0} as a baseline end-to-end rule induction method and \textit{PedC5.0} \cite{rem, rem_dissertation} as a baseline pedagogical algorithm. The latter method uses C5.0 to induce a rule set from a training set where each sample is labelled using the DNN's prediction for that sample.

When comparing any two methods, we evaluate the fidelity of the extracted rule sets with respect to the input DNN (defined as the predictive accuracy when sample $\mathbf{x}^{(i)}$ is assigned label $\hat{y}^{(i)} = \argmax_{j}{f_\theta(\mathbf{x}^{(i)})_j}$) and their comprehensibility as measured by their number of rules and their average rule length. Furthermore, as a way to evaluate the tractability of our methods, we keep track of the resources utilised by each method in terms of both wall-clock time and memory. Finally, given that there may be a direct benefit from using the rule sets extracted from the DNN as standalone interpretable models (e.g., as a form of model distillation that generates stronger rule-based models), we include each rule set's accuracy and AUC as part of our evaluation.

\textbf{Model and Hyperparameter Selection} \hspace{2pt} Although ECLAIRE is applicable to arbitrary DNN architectures, in order for us to simplify architecture selection during evaluation, in our experiments we focus on extracting rule sets from multilayer perceptrons (MLPs). Moreover, in order to simulate ECLAIRE's use case, we maintain a strict separation between fine-tuning the DNN we train for a task and fine-tuning the rule extraction algorithm we apply to that model. More specifically, we train several MLP architectures for each task and extract rules only from the best performing model. During rule extraction, we control the growth of intermediate rule sets induced via C5.0 by varying only the minimum number of samples $\mu$ that we require C5.0 to have before splitting a node. Because $\mu$ requires different fine-tuning strategies across different tasks\footnote{This is particularly important in REM-D and DeepRED\textsuperscript{*} as low values of $\mu$ can easily lead to intractability.}, in this section we report only the highest performing rule set we obtain after varying $\mu$ across a task-and-method-specific spectrum. Unless specified otherwise, all other C5.0's hyperparameters, with the exception of \texttt{winnowing} which is always set to true, are left as their default values. We include all details of each task's architecture and hyperparameter selection, including the search spectrum used for $\mu$ for each task, in Appendix~\ref{appendix:experiment_details}.

\subsection{XOR: Evaluating Rule Set Interpretability in Controlled Setup}
\label{sec:experimental-xor}

In this section, we use a synthetic dataset to evaluate the performance of ECLAIRE and our baselines in a controlled environment known to be challenging for vanilla rule induction algorithms and pedagogical rule extraction methods. For this, we make use of a variant of the XOR classification task, in a form commonly used for studying feature selection methods \cite{symbolic_meta_learning, learning_to_explain, kernel_feature_selection}, in which we generate a dataset $\{(\mathbf{x}^{(i)}, y_i)\}_{i = 1}^{1000}$ with 1000 10-dimensional samples such that every data point $\mathbf{x}^{(i)} \in [0, 1]^{10}$ is constructed by independently sampling each dimension from a uniform distribution in $[0, 1]$. In this dataset, we assign a binary label $y_i \in \{0, 1\}$ to each point $\mathbf{x}^{(i)}$ by XOR-ing the result of rounding its first two dimensions as $y_i = \text{round}\big(\mathbf{x}^{(i)}_1\big) \oplus \text{round}\big(\mathbf{x}^{(i)}_2\big)$. 

In order for a classifier to be able to achieve a high performance in this task, it must learn that only the first two dimensions are meaningful for predicting the output label. This property makes this dataset an illustrative example of a task in which decision trees are unable to obtain a high performance without a large training dataset \cite{dt_parity_function, deepred}. In this section, we hypothesise that decompositional methods will outperform both vanilla rule induction algorithms and pedagogical rule extraction methods as their use of a DNN's latent representations during training results in an implicit data augmentation process. Furthermore, we hypothesise that within our decompositional baselines, ECLAIRE will outperform other methods given that its two-step process avoids the need of nesting multiple sequential approximations when merging rule sets as done in both REM-D and DeepRED\textsuperscript{*}. To test both hypothesises, we extract rules from an MLP with hidden layer sizes $\{64, 32, 16\}$ trained on this dataset and report our findings in the XOR rows of Table~\ref{table:all_results}. The MLP's test accuracy is $96.6\% \pm 1.9\%$ and its AUC is $96.6\% \pm 1.9\%$.

\begin{wrapfigure}{R}{0.4\textwidth}
    \centering
    \includegraphics[width=0.39\textwidth]{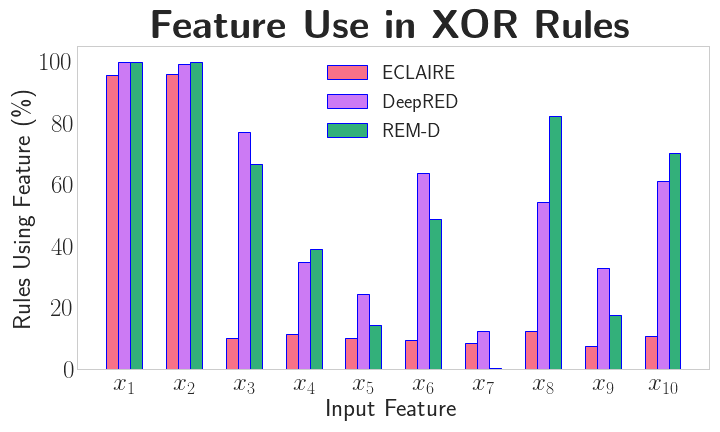}
    \caption{Fraction of rules whose premise thresholds a given feature.}
    \label{fig:xor_feature_importance}
\end{wrapfigure}

As hypothesised, Table~\ref{table:all_results} shows that ECLAIRE outperforms all other decompositional methods in the XOR task in terms of both accuracy and fidelity. We further observe that ECLAIRE uses significantly less resources than REM-D and DeepRED\textsuperscript{*} while generating significantly fewer rules than both. Moreover, note that, as expected, we also observe that both C5.0 and PedC5.0 are unable to capture any meaningful rules and instead generate an empty rule set that acts as a random classifier. Finally, we observe that the standard deviation of ECLAIRE's rule set sizes is relatively small compared to its mean. This is in stark contrast with the observed standard deviations of rule set sizes in both REM-D and DeepRED\textsuperscript{*}, which tend to be even greater than their means. Such behaviour indicates that ECLAIRE is more robust to changes in its training data and input DNN than REM-D and DeepRED\textsuperscript{*}. We conjecture a hypothesis to explain these results in Section~\ref{sec:discussion}.

Our results also indicate that ECLAIRE is better at extracting rules that correctly capture the underlying generating process of the XOR dataset, therefore providing better global insights on what a DNN learnt from its training data. This can be seen when analysing which features were considered important in the rules extracted by ECLAIRE. Such analysis, shown in Figure~\ref{fig:xor_feature_importance}, indicates that rules in ECLAIRE are able to identify the importance that the first two features (i.e., $x_1$ and $x_2$) have on the resulting output class by very rarely conditioning on other features.
This is at odds with what we see in rules extracted by REM-D and DeepRED\textsuperscript{*}, where a significant fraction of rule premises heavily rely on features other than $x_1$ and $x_2$.

\begin{table}
    \centering
    \caption{
        Rule extraction results. For each dataset, we highlight the most competitive value obtained for any metric across all decompositional methods. Furthermore,
        we use a single horizontal line to visually separate decompositional and non-decompositional methods across all datasets. AUC is not included in \textit{Letters} dataset due to its multiclass nature.
    }
    \label{table:all_results}
    \resizebox{0.95\textwidth}{!}{
        \begin{tabular}{c|c|ccccccc}
            & 
            \textbf{Method} &
              \textbf{Accuracy (\%)} &
              \textbf{AUC (\%)} &
              \textbf{Fidelity (\%)} &
              \textbf{Runtime (s)} &
              \textbf{Memory (MB)} &
              \textbf{Rule Set Size} &
              \textbf{Avg Rule Length} \\ \hhline{~|-|-|-|-|-|-|-|-|} \hhline{~|-|-|-|-|-|-|-|-|}

            \multirow{5}{*}{\rotatebox[origin=c]{90}{\textbf{XOR}}}
            & C5.0($\mu = 2$) &
              52.6 $\pm$ 0.02 &
              50 $\pm$ 0 &
              N/A &
              0.10 $\pm$ 0.01 &
              38.35 $\pm$ 13.68 &
              1 $\pm$ 0 &
              0 $\pm$ 0 \\
            & PedC5.0($\mu = 2$) &
              52.6 $\pm$ 0.2 &
              50 $\pm$ 0 &
              53 $\pm$ 0.15 &
              0.19 $\pm$ 0.02 &
              308.44 $\pm$ 27.14 &
              1 $\pm$ 0 &
              0 $\pm$ 0 \\ \hhline{~|-|-|-|-|-|-|-|-|}
            & DeepRed\textsuperscript{*}($\mu = 30$) &
              87.3 $\pm$ 3.7 &
              86.9 $\pm$ 3.9 &
              88.1 $\pm$ 4 &
              390.97 $\pm$ 542.12 &
              20,393.19 $\pm$ 35,889.69 &
              9,149.8 $\pm$ 16,009.6 &
              4.78 $\pm$ 1.77 \\
            & REM-D($\mu = 33$) &
              88.4 $\pm$ 2 &
              88 $\pm$ 2.1 &
              88.2 $\pm$ 3.1 &
              116.12 $\pm$ 205.64 &
              757.98 $\pm$ 14,173.63 &
              3,337 $\pm$ 6,280.65 &
              4.46 $\pm$ 1.55 \\
            & ECLAIRE($\mu = 2$) &
              \textbf{91.8 $\pm$ 2.6} &
              \textbf{91.4 $\pm$ 2.7} &
              \textbf{91.4 $\pm$ 2.4} &
              \textbf{4.15 $\pm$ 0.24} &
              \textbf{523.6 $\pm$ 172.72} &
              \textbf{87 $\pm$ 16.24} &
              \textbf{3.03 $\pm$ 0.23} \\ \hhline{~|-|-|-|-|-|-|-|-|} \hhline{~|-|-|-|-|-|-|-|-|}

        \cellcolor{gray!10} & \multicolumn{1}{c|}{\cellcolor{gray!10}C5.0($\mu = 4$)} &
            \cellcolor{gray!10}91.1 $\pm$ 1.9 &
            \cellcolor{gray!10}87.4 $\pm$ 2.7 &
            \cellcolor{gray!10}N/A &
            \cellcolor{gray!10}20.33 $\pm$ 0.82 &
            \cellcolor{gray!10}303.86 $\pm$ 89.05 &
            \cellcolor{gray!10}17.6 $\pm$ 3.38 &
            \cellcolor{gray!10}2.89 $\pm$ 0.29  \\ 
        \cellcolor{gray!10} & \multicolumn{1}{c|}{\cellcolor{gray!10}PedC5.0($\mu = 2$)} &
            \cellcolor{gray!10}92.7 $\pm$ 0.9 &
            \cellcolor{gray!10}91 $\pm$ 0.6 &
            \cellcolor{gray!10}93 $\pm$ 1.2 &
            \cellcolor{gray!10}20.91 $\pm$ 1.83 &
            \cellcolor{gray!10}1661.06 $\pm$ 2711.3 &
            \cellcolor{gray!10}21.8 $\pm$ 2.99 &
            \cellcolor{gray!10}3.32 $\pm$ 0.2 \\ \hhline{~|-|-|-|-|-|-|-|-|}
        \cellcolor{gray!10} & \multicolumn{1}{c|}{\cellcolor{gray!10}DeepRed\textsuperscript{*}($\mu = 5$)} &
            \cellcolor{gray!10}92 $\pm$ 1.2 &
            \cellcolor{gray!10}89.2 $\pm$ 3 &
            \cellcolor{gray!10}92.3 $\pm$ 2.4 &
            \cellcolor{gray!10}328.67 $\pm$ 173.53 &
            \cellcolor{gray!10}13,262.24 $\pm$ 22,246.47 &
            \cellcolor{gray!10}4,141.4 $\pm$ 8,023.2 &
            \cellcolor{gray!10}5.54 $\pm$ 2.78 \\
        \cellcolor{gray!10} & \multicolumn{1}{c|}{\cellcolor{gray!10}REM-D($\mu = 5$)} &
            \cellcolor{gray!10}91.9 $\pm$ 1.2 &
            \cellcolor{gray!10}89.1 $\pm$ 2.9 &
            \cellcolor{gray!10}92.3 $\pm$ 2.1 &
            \cellcolor{gray!10}154.69 $\pm$ 772.87 &
            \cellcolor{gray!10}\textbf{7,345.99 $\pm$ 7,647.99} &
            \cellcolor{gray!10}1,572.6 $\pm$ 2,935.72 &
            \cellcolor{gray!10}5.36 $\pm$ 2.59 \\
        \multirow{-5}{*}{\rotatebox[origin=c]{90}{\cellcolor{gray!10}\textbf{MB-ER}}} & \multicolumn{1}{c|}{\cellcolor{gray!10}ECLAIRE($\mu = 5$)} &
            \cellcolor{gray!10}\textbf{94.1 $\pm$ 1.6} &
            \cellcolor{gray!10}\textbf{91.8 $\pm$ 2.5} &
            \cellcolor{gray!10}\textbf{94.7 $\pm$ 0.2} &
            \cellcolor{gray!10}\textbf{60.52 $\pm$ 12.1} &
            \cellcolor{gray!10}9,518.72 $\pm$ 5,310.94 &
            \cellcolor{gray!10}\textbf{48.4 $\pm$ 15.28} &
            \cellcolor{gray!10}\textbf{2.84 $\pm$ 0.18} \\ \hhline{~|-|-|-|-|-|-|-|-|} \hhline{~|-|-|-|-|-|-|-|-|} 
        
        \multirow{5}{*}{\rotatebox[origin=c]{90}{\textbf{MB-HIST}}}
        & \multicolumn{1}{c|}{C5.0($\mu = 5$)} &
            89.7 $\pm$ 2 &
            63.5 $\pm$ 8.5 &
            N/A &
            16.06 $\pm$ 0.64 &
            292.21 $\pm$ 67.82 &
            8.2 $\pm$ 5.42 &
            2.14 $\pm$ 1.16  \\
        & \multicolumn{1}{c|}{PedC5.0($\mu = 5$)} &
            87.9 $\pm$ 0.9 &
            72.5 $\pm$ 4.5 &
            89.3 $\pm$ 1 &
            17.16 $\pm$ 0.56 &
            310.79 $\pm$ 83.78 &
            12.8 $\pm$ 3.12 &
            2.75 $\pm$ 0.32 \\ \cline{2-9}
        & \multicolumn{1}{c|}{DeepRed\textsuperscript{*}($\mu = 5$)} &
            88.9 $\pm$ 4.3 &
            73.6 $\pm$ 9.5 &
            89.3 $\pm$ 4.6 &
            523.24 $\pm$ 249.45 &
            \textbf{1,818.16 $\pm$ 881.14} &
            598.8 $\pm$ 355.31 &
            7.75 $\pm$ 2.47 \\
        & \multicolumn{1}{c|}{REM-D($\mu = 5$)} &
            \textbf{89.4 $\pm$ 3.3} &
            74.5 $\pm$ 6.6 &
            89.4 $\pm$ 2.7 &
            128.51 $\pm$ 25.89 &
            11,711.12 $\pm$ 76.2 &
            71.4 $\pm$ 50.07 &
            4.98 $\pm$ 2.04 \\
        & \multicolumn{1}{c|}{ECLAIRE($\mu = 9$)} &
            88.9 $\pm$ 2.3 &
            \textbf{77.4 $\pm$ 3.7} &
            \textbf{89.4 $\pm$ 1.8} &
            \textbf{54.08 $\pm$ 5.68} &
            4,754.45 $\pm$ 4,306.61 &
            \textbf{30 $\pm$ 12.36} &
            \textbf{2.49 $\pm$ 0.21} \\ \hhline{~|-|-|-|-|-|-|-|-|} \hhline{~|-|-|-|-|-|-|-|-|}

        \cellcolor{gray!10} & \multicolumn{1}{c|}{\cellcolor{gray!10}C5.0($\mu = 30$)} &
            \cellcolor{gray!10}81.6 $\pm$ 4.6 &
            \cellcolor{gray!10}75.6 $\pm$ 7.4 &
            \cellcolor{gray!10}N/A & 
            \cellcolor{gray!10}2.48 $\pm$ 0.12 &
            \cellcolor{gray!10}81.54 $\pm$ 66.95 &
            \cellcolor{gray!10}33.2 $\pm$ 1.94 &
            \cellcolor{gray!10}3.21 $\pm$ 0.22 \\
        \cellcolor{gray!10} & \multicolumn{1}{c|}{\cellcolor{gray!10}PedC5.0($\mu = 15$)} &
            \cellcolor{gray!10}82.8 $\pm$ 0.9 &
            \cellcolor{gray!10}77.8 $\pm$ 2.7  &
            \cellcolor{gray!10}85.4 $\pm$ 2.5 &
            \cellcolor{gray!10}2.67 $\pm$ 0.18 &
            \cellcolor{gray!10}889.76 $\pm$ 44.23 &
            \cellcolor{gray!10}57.8 $\pm$ 4.49 &
            \cellcolor{gray!10}3.61 $\pm$ 0.28 \\ \hhline{~|-|-|-|-|-|-|-|-|}
        \cellcolor{gray!10} & \multicolumn{1}{c|}{\cellcolor{gray!10}DeepRed\textsuperscript{*}($\mu = 700$)} &
            \cellcolor{gray!10}78.7 $\pm$ 1 &
            \cellcolor{gray!10}74.9 $\pm$ 3 &
            \cellcolor{gray!10}81 $\pm$ 1.7 &
            \cellcolor{gray!10}708.33 $\pm$ 762.56 &
            \cellcolor{gray!10}11,730.22 $\pm$ 21,684.99 &
            \cellcolor{gray!10}5,142.6 $\pm$ 9,799.13 &
            \cellcolor{gray!10}5.43 $\pm$ 1.91 \\
        \cellcolor{gray!10} & \multicolumn{1}{c|}{\cellcolor{gray!10}REM-D($\mu = 700$)} &
            \cellcolor{gray!10}78.6 $\pm$ 1.1 &
            \cellcolor{gray!10}74.9 $\pm$ 3 &
            \cellcolor{gray!10}81.1 $\pm$ 1.7 &
            \cellcolor{gray!10}355.82 $\pm$ 382.77 &
            \cellcolor{gray!10}8,519.24 $\pm$ 15,301.8 &
            \cellcolor{gray!10}3,616.8 $\pm$ 6,748.41 &
            \cellcolor{gray!10}5.41 $\pm$ 1.88 \\
        \cellcolor{gray!10} \multirow{-5}{*}{\rotatebox[origin=c]{90}{\cellcolor{gray!10}\textbf{MAGIC}}} & \multicolumn{1}{c|}{\cellcolor{gray!10}ECLAIRE($\mu = 50$)} &
            \cellcolor{gray!10}\textbf{84.6 $\pm$ 0.5} &
            \cellcolor{gray!10}\textbf{80.2 $\pm$ 1} &
            \cellcolor{gray!10}\textbf{87.4 $\pm$ 1.2} &
            \cellcolor{gray!10}\textbf{58.26 $\pm$ 10.11} &
            \cellcolor{gray!10}\textbf{1,277.78 $\pm$ 411.14} &
            \cellcolor{gray!10}\textbf{396.2 $\pm$ 74.93} &
            \cellcolor{gray!10}\textbf{3.82 $\pm$ 0.18} \\ \hhline{~|-|-|-|-|-|-|-|-|} \hhline{~|-|-|-|-|-|-|-|-|} 
            
        \multirow{5}{*}{\rotatebox[origin=c]{90}{\textbf{MiniBooNE}}}
        & \multicolumn{1}{c|}{C5.0($\mu = 25$)}
            & 91.7 $\pm$ 0.2 &
            89.3 $\pm$ 0.9 &
            N/A &
            86.57 $\pm$ 4.45 &
            326.19 $\pm$ 65.03 &
            114.2 $\pm$ 12.42 &
            6.04 $\pm$ 0.16  \\
        & \multicolumn{1}{c|}{PedC5.0($\mu = 15$)} &
            91.5 $\pm$ 0.2 &
            90 $\pm$ 0.6 &
            94.1 $\pm$ 0.4 &
            86.76 $\pm$ 2.56 &
            17,608.17 $\pm$ 8,597.74 &
            150.6 $\pm$ 7.74 &
            5.95 $\pm$ 0.26 \\ \cline{2-9}
        & \multicolumn{1}{c|}{DeepRed\textsuperscript{*}($\mu = 0.03N$)} &
            84.2 $\pm$ 3.5 &
            83.5 $\pm$ 1.1 &
            86.4 $\pm$ 2.4 &
            3,983.45 $\pm$ 1,497.53 &
            22,378.28 $\pm$ 620.88 &
            \textbf{262.8 $\pm$ 298.31} &
            \textbf{4.44 $\pm$ 1.49} \\
        & \multicolumn{1}{c|}{REM-D($\mu = 0.025N$)} &
            85.2 $\pm$ 2.3 &
            84.6 $\pm$ 1.4 &
            87.3 $\pm$ 2.1 &
            2,981.38 $\pm$ 1,496.11 &
            19,957.87 $\pm$ 5,879.18 &
            1,182.6 $\pm$ 1,416.93 & 5.23 $\pm$ 1.5 \\
        & \multicolumn{1}{c|}{ECLAIRE($\mu = 0.0008N$)} &
            \textbf{91.4 $\pm$ 0.3} &
            \textbf{90.5 $\pm$ 0.8} &
            \textbf{94.6 $\pm$ 0.3} & \textbf{2,930.79 $\pm$ 160.02} &
            \textbf{3,555.71 $\pm$ 334.75} &
            1,484.8 $\pm$ 131.9 &
            5.81 $\pm$ 0.22 \\ \hhline{~|-|-|-|-|-|-|-|-|} \hhline{~|-|-|-|-|-|-|-|-|}

        \cellcolor{gray!10} & \multicolumn{1}{c|}{\cellcolor{gray!10}C5.0($\mu = 5$)} &
            \cellcolor{gray!10}63.1 $\pm$ 2.5 &
            \cellcolor{gray!10}N/A &
            \cellcolor{gray!10}N/A &
            \cellcolor{gray!10}4.4 $\pm$ 0.42 &
            \cellcolor{gray!10}1,254.97 $\pm$ 39.04 &
            \cellcolor{gray!10}468.6 $\pm$ 6.95 &
            \cellcolor{gray!10}7.51 $\pm$ 0.07  \\
        \cellcolor{gray!10} & \multicolumn{1}{c|}{\cellcolor{gray!10}PedC5.0($\mu = 5$)} &
            \cellcolor{gray!10}60.8 $\pm$ 3.6 &
            \cellcolor{gray!10}N/A &
            \cellcolor{gray!10}60.3 $\pm$ 3.4 &
            \cellcolor{gray!10}4.74 $\pm$ 0.627 &
            \cellcolor{gray!10}1,509.73 $\pm$ 186.85 &
            \cellcolor{gray!10}468 $\pm$ 10.06 &
            \cellcolor{gray!10}7.5 $\pm$ 0.06 \\ \hhline{~|-|-|-|-|-|-|-|-|}
        \cellcolor{gray!10} & \multicolumn{1}{c|}{\cellcolor{gray!10}DeepRed\textsuperscript{*}($\mu = 0.3N$)} &
            \cellcolor{gray!10}4.1 $\pm$ 0.4 &
            \cellcolor{gray!10}N/A &
            \cellcolor{gray!10}4.1 $\pm$ 0.3 &
            \cellcolor{gray!10}2,849.4 $\pm$ 487.31 &
            \cellcolor{gray!10}46,059.52 $\pm$ 34,201.83 &
            \cellcolor{gray!10}16,202.8 $\pm$ 11,208.12 &
            \cellcolor{gray!10}10.06 $\pm$ 0.81 \\
        \cellcolor{gray!10} & \multicolumn{1}{c|}{\cellcolor{gray!10}REM-D($\mu = 0.3N$)} &
            \cellcolor{gray!10}4.2 $\pm$ 0.6 &
            \cellcolor{gray!10}N/A &
            \cellcolor{gray!10}4 $\pm$ 0.4 &
            \cellcolor{gray!10}1,576.74 $\pm$ 248.19 &
            \cellcolor{gray!10}37,917.65 $\pm$ 32,028.71 &
            \cellcolor{gray!10}13,144.2 $\pm$ 10,368.82 &
            \cellcolor{gray!10}10.11 $\pm$ 0.79 \\
        \cellcolor{gray!10} & \multicolumn{1}{c|}{\cellcolor{gray!10}ECLAIRE($\mu = 8$)} &
            \cellcolor{gray!10}55.7 $\pm$ 4.4 &
            \cellcolor{gray!10}N/A &
            \cellcolor{gray!10}55.7 $\pm$ 4.3 &
            \cellcolor{gray!10}\textbf{473.18 $\pm$ 7.83} &
            \cellcolor{gray!10}\textbf{2,802.04 $\pm$ 165.92} &
            \cellcolor{gray!10}\textbf{1,219.4 $\pm$ 70.64} &
            \cellcolor{gray!10}\textbf{5.41 $\pm$ 0.17}  \\
        \cellcolor{gray!10} \multirow{-6}{*}{\rotatebox[origin=c]{90}{\cellcolor{gray!10}\textbf{Letters}}} & \multicolumn{1}{c|}{\cellcolor{gray!10}ECLAIRE\textsuperscript{*}($\mu = 10$)} &
            \cellcolor{gray!10}\textbf{64.9 $\pm$ 2.9} &
            \cellcolor{gray!10}N/A &
            \cellcolor{gray!10}\textbf{64.7 $\pm$ 2.7} &
            \cellcolor{gray!10}487.7 $\pm$ 37.4 &
            \cellcolor{gray!10}4,351.04 $\pm$ 62.05 &
            \cellcolor{gray!10}1,842.4 $\pm$ 109 &
            \cellcolor{gray!10}6.09 $\pm$ 0.07  \\ \hhline{~|-|-|-|-|-|-|-|-|} \hhline{~|-|-|-|-|-|-|-|-|}
        \end{tabular}
    }
\end{table}

\subsection{METABRIC: Medical Applications}
\label{sec:experimental-medical-applications}

In this section we evaluate ECLAIRE in real and safety-critical tasks in which interpretable models are favoured. To achieve this, we evaluate our methods on two medical genomics tasks defined by Shams et al. in \cite{rem} based on the METABRIC dataset. More specifically, we extract rules from models trained to predict immunohistochenical subtypes (MB-ER) and histological tumour subtypes (MB-HIST) from a sequence of mRNA expression profiles. For further details, refer to Appendix~\ref{appendix:dataset_details}.

Using the same DNN architecture and hyperparameters as in \cite{rem}, we train two MLPs with hidden layers $\{128, 16 \}$, one trained on MB-ER and one trained on MB-HIST, and extract rules from these models. In MB-ER, our MLP achieves a test accuracy of $97.9\% \pm 0.3\%$ and test AUC of $95.6\% \pm 1.1\%$ while in MB-HIST our MLP achieves a test accuracy of $91.2\% \pm 3.5\%$ and a test AUC of $88.4\% \pm 3\%$. We summarise these results in the MB-ER and MB-HIST rows of Table~\ref{table:all_results}.

As in XOR, in both of these tasks we observe that ECLAIRE is able to efficiently extract rule sets that are not only significantly smaller than those produced by DeepRED\textsuperscript{*} and REM-D (up to 32x fewer rules on MB-ER) but also higher-performing in their predictive accuracy. Only in MB-HIST we see ECLAIRE underperforming in predictive accuracy when compared to REM-D. However, given the high class imbalance in this dataset (91.3\% of samples are of one class), AUC is a more meaningful metric than accuracy. ECLAIRE's superior AUC in MB-HIST indicates that it is able to extract higher-fidelity rule sets than our baselines in binary tasks with a high class imbalance, a significant advantage for several real-world tasks. Finally, note that although we see similar benefits in ECLAIRE's runtime as in XOR, in these tasks ECLAIRE’s parallelism overhead leads to a higher memory consumption than REM-D and DeepRED\textsuperscript{*}. Nevertheless, although ECALIRE's rule sets are not as small as those produced by PedC5.0 and C5.0, they have fewer than 50 rules, making them easy to inspect and debug for practitioners that may require such a model for prognosis purposes.

\subsection{MiniBooNE and MAGIC: Scalability Study}
\label{sec:experimental-physics-scalability}

In this section we discuss a series of experiments that evaluate how ECLAIRE and our baselines scale when the number of samples and the depth of the architecture increase. For this, we use two particle physics binary classification tasks that offer a large training set: MAGIC \cite{dataset_magic, dataset_uci_ml_repository}, with 19,020 training samples, and MiniBooNE \cite{dataset_miniboone_detector, dataset_uci_ml_repository}, with 130,065 training samples. We train an MLP with hidden layers $\{64, 32, 16 \}$ on MAGIC and an MLP with hidden layers $\{128, 64, 32, 16, 8 \}$ on MiniBooNE and report those results in Table~\ref{table:all_results}. The MLP trained on MAGIC achieves a test accuracy of $86.3\% \pm 1.8\%$ and a test AUC of $86.3\% \pm 1.8\%$ while the MLP trained on MiniBooNE achieves a test accuracy of 97.9\% $\pm$ 0.3\% and a test AUC of 93\% $\pm$ 0.2\%.

In these experiments, we came across significant difficulties when using REM-D and DeepRED\textsuperscript{*}. This was because most of their configurations led to runs that took more than 6 hours in total, either due to intractable rule extraction or intractable inference caused by large extracted rule sets. Thus, we increased the value of $\mu$ to be a significant fraction of $N$, the number of points in the dataset, and we show the best value we obtain by varying $\mu$ as indicated in Appendix~\ref{sec:appendix-miniboone}. Nevertheless, notice that although such amounts of pruning lead to smaller rule sets than ECLAIRE's in MiniBooNE, these are overpruned rule sets which underperform in the test set.

Our experiments show that ECLAIRE can scale to both large architectures and large datasets without the need for excessively large values of $\mu$ (i.e., no need for excessive intermediate rule set pruning). Furthermore, we can see that ECLAIRE outperforms all of the baselines in both tasks in terms of its AUC and fidelity. Only in its accuracy it is marginally outperformed by C5.0 and PedC5.0 in the MiniBooNE task. Nevertheless, given the class imbalance in this task, the benefit from ECLAIRE's AUC improvements out-weights its marginally worse accuracy.

\subsection{Letter Recognition: Multiclass Classification}
\label{sec:experimental-multi-class}

Although both REM-D and DeepRED were developed to be applicable to multiclass tasks, they have never been formally evaluated in such tasks. In this section we fill this gap by extracting rules from a model trained on the Letter Recognition Dataset \cite{dataset_letter, dataset_uci_ml_repository}, a classification task with 26 target labels. We proceed by extracting rules from a trained MLP with hidden layers $\{128, 64 \}$ which achieves a test accuracy of $95\% \pm 0.2\%$. We summarise our results in Table~\ref{table:all_results} and include further experiments using an alternative tree induction algorithm in Appendix~\ref{sec:appendix-letter}.

As in other data-intensive tasks, we first observe a difficultly in tractably extracting rules using \hbox{REM-D} and DeepRED\textsuperscript{*} without large values of $\mu$. Even when we find a configuration for these methods that terminates in the allotted time, we are unable to observe results that are significantly better than random. Second, although ECLAIRE is able to outperform both DeepRED\textsuperscript{*} and REM-D, it struggles to outperform PedC5.0 and C5.0. We believe this to be the case due to possible class imbalance in intermediate rule sets: when performing a clause-wise substitution in intermediate rules, and as the number of classes increases, the number of samples that satisfy the premise of an intermediate rule becomes smaller. This implies that when using C5.0 to substitute an intermediate clause, it has to face a highly imbalanced binary classification task, something known to be problematic for decision trees \cite{dt_class_imbalance}. While we are able to slightly alleviate this in our results through C5.0's support for independent class weights, it remains as future work to explore better mechanisms for handling this imbalance.

An interesting observation is that when we include the input features as one of the representations which ECLAIRE can use to extract intermediate rule sets, it results in a rule set that outperforms C5.0 and PedC5.0. This result, shown at the bottom of Table~\ref{table:all_results} as ECLAIRE\textsuperscript{*}, suggests that using a DNN's input features as one of the representations that ECLAIRE has access to can be beneficial for the performance of its end rule set. This can be thought of as an eclectic variation of ECLAIRE and we leave it as future research to explore whether this improvement generalises across domains.

\section{Discussion}
\label{sec:discussion}

\textbf{Variance Reduction in ECLAIRE} \hspace{2pt} In \cite{rem}, Shams et al. observe a very high variance in the size of rule sets extracted with REM-D and postulate that this is due to different neural network initialisations leading to latent spaces that vary in interpretability properties. Nevertheless, our results in Table~\ref{table:all_results} suggest that this may not be the case, as ECLAIRE's variance is significantly smaller than that observed in both REM-D and DeepRED\textsuperscript{*}. Inspired by results on convergence learning \cite{convergent_learning, network_dissection}, in this paper we challenge this hypothesis by arguing that the variance observed in REM-D, and therefore in DeepRED, is not the result of different initialisations leading to different levels of interpretability in a DNN's latent space, but it is rather the result of a crucial design factor in \mbox{REM-D}: REM-D uses rule sets induced from decision trees to substitute terms in its intermediate rule sets, forcing the need of a cartesian-product redistribution of terms afterwards. Decision trees, being high-variance classifiers \cite{dt_high_variance}, are highly sensitive to their training sets. This implies that when REM-D substitutes the same term in an intermediate rule using two different datasets (as it would happen across two different training folds), it can generate vastly different rule sets. Hence, because during its term-wise substitution REM-D replaces each term in an intermediate rule's premise with a set of rules extracted from a decision tree, the variance in the number of rules after one substitution grows in a multiplicative fashion. This is because all newly substituted terms in a clause need to be redistributed in a cartesian-product (a visual representation of this process can be found in Appendix~\ref{appendix:rem_d_substitution}). Therefore, we argue that the significant reduction in variance we observe in our evaluation of ECLAIRE is due to the fact that ECLAIRE performs a clause-wise substitution rather than a term-wise substitution, hence avoiding a multiplicative growth in the number of rules. This hypothesis explains the results in Table~\ref{table:all_results} and highlights a significant advantage of ECLAIRE over other decompositional methods.

\textbf{Growth Coping Mechanisms} \hspace{2pt} Despite ECLAIRE's superior performance compared to our baselines, Table~\ref{table:all_results} shows that ECLAIRE can still take a considerable amount of time to extract high-performing rule sets in large datasets and architectures. Furthermore, we observe a growth in the number of rules extracted by ECLAIRE as the amount of data and the model size increase. As a way to alleviate this growth, we explore four different coping mechanisms: (1) \textit{intermediate rule pruning}, where, as in \cite{rem}, we increase the value of $\mu$ to generate smaller intermediate rule sets, (2) \textit{hidden representation subsampling}, where we generate intermediate rule sets only from a subset of the DNN's intermediate representations, (3) \textit{training set subsampling}, where, as in \cite{deepred}, only a fraction of the DNN's training set is used for rule extraction, and (4) \textit{intermediate rule ranking}, where the lowest $p\%$ of rules in intermediate rule sets, as ranked by their confidence levels, are dropped. We summarise these experiments in Appendix~\ref{appendix:coping_mechanisms}. 

Our results indicate that the performance of ECLAIRE's rule sets is robust to heavy training data subsampling as well as significant intermediate representation subsampling and intermediate rule pruning via confidence ranking. Specifically, we observe that ECLAIRE is very \textit{data efficient}: one can subsample up to $50\%$ of the training set during rule extraction, causing a $\sim 50\%$ drop in extraction time and rule set size, without observing an absolute loss of more than $0.5\%$ in the resulting rule set fidelity. Similar results are observed when one subsamples every other two hidden representations when generating intermediate rule sets or drops the lowest $25\%$ of rules in intermediate rule sets. Nevertheless, our experiments also highlight a clear limitation in ECLAIRE: for achieving optimal performance, ECLAIRE requires significant task-specific fine tuning of its $\mu$ hyperparameter. While in practice we observe that selecting $\mu$ from $[10^{-5}N, 10^{-4}N]$ yields good results, we leave the exploration of more systematic strategies for selecting $\mu$ to future work.

\section{Conclusion}
With the ubiquitous use of DNNs, it has become increasingly important to be able to explain, as well as debug, the decisions made by production-sized DNNs. In this work we aim to fill this gap by proposing ECLAIRE, a novel decompositional rule extraction algorithm, which, to the best of our knowledge, is the first polynomial-time decompositional algorithm able to scale to both large training sets and deep architectures. In contrast to previous work which focuses on using intermediate representations in a sequential manner, ECLAIRE exploits these representations in parallel to build an ensemble of classifiers that can then be efficiently combined into a single rule set. We evaluate ECLAIRE on six different tasks and show that it consistently outperforms the previous state-of-the-art decompositional methods and vanilla rule induction methods in terms of the accuracy and fidelity of its output rule set. At the same time, ECLAIRE uses orders of magnitude less resources and produces fewer rules than current state-of-the-art decompositional methods. Excited with the prospects that can arise from this work, and to encourage the use of our algorithms in both industry and research, we make all of our methods, together with an extensive set of visualisation and inspection tools, publicly available as part of the REMIX library. Follow up research could explore generalising ECLAIRE to regression tasks, as well as to RNN architectures, and could include work on alleviating the effects of class imbalance in multiclass settings.

\section*{Acknowledgements}

MEZ acknowledges support from the Gates Cambridge Trust via the Gates Cambridge Scholarship.

{
    \small
    \bibliographystyle{unsrt} 
    \bibliography{references} 
}

\appendix

\section{ECLAIRE's Asymptotic Bounds}
\label{appendix:bound_proof}

\subsection{Notation}

Throughout this section, we let $|R|$ be the number of rules in a rule set $R$, $\mathcal{T}(R)$ be the set of all \textit{unique terms} in rules in $R$, and $\mathcal{T}_\text{max}(R)$ be the set of terms in the \textit{longest rule} in $R$.

\subsection{Lemmas}
Before showing Theorem~\ref{theorem:eclaire-runtime}, we introduce two useful lemmas related to rule induction:

\begin{lemma}
    \label{lemma:dt_num_terms}
    If $R$ is a rule set induced from a decision tree trained with $N$ samples using a recursive top-down impurity-based algorithm (e.g., C5.0), then the following must hold:
    \[
        \big| \mathcal{T}\big( R \big) \big| \le 2(N - 1) \;\;\;\;\; \text{     and     } \;\;\;\;\; \big| \mathcal{T}_\text{max}\big( R \big) \big| \le (N - 1)
    \]
\end{lemma}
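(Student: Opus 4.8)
The plan is to translate both inequalities into statements about the structure of the underlying decision tree, exploiting the standard correspondence between leaves and rules and between internal nodes and terms. Recall that a rule set $R$ induced from a decision tree has exactly one rule per leaf, and that the premise of each rule is the conjunction of the threshold tests encountered along the unique root-to-leaf path. Since the splits are binary threshold tests of the form $(x_i \le v)$ / $(x_i > v)$ (matching the term format fixed in our problem setup), the tree is a \emph{full} binary tree: every internal node has exactly two children, so $\#\text{leaves} = \#\text{internal nodes} + 1$.

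First I would bound the number of leaves. Because a top-down impurity-based algorithm only splits nodes that contain samples, and an impurity-reducing binary split must separate the data into two nonempty parts, every leaf of the final tree contains at least one of the $N$ training samples, and distinct leaves correspond to disjoint subsets of the data. Hence the tree has at most $N$ leaves. Combining this with the full-binary-tree identity above, I obtain that the tree has at most $N - 1$ internal nodes.

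Next I would read off the two bounds from this count. For $\big| \mathcal{T}(R) \big|$, observe that each internal node contributes exactly the two terms $(x_i \le v)$ and $(x_i > v)$ labelling its two outgoing edges, so the multiset of all term occurrences appearing in $R$ has size at most $2 \cdot (\#\text{internal nodes}) \le 2(N-1)$; since $\mathcal{T}(R)$ collects only the \emph{unique} terms, the same upper bound holds, yielding $\big| \mathcal{T}(R) \big| \le 2(N-1)$. For $\big| \mathcal{T}_\text{max}(R) \big|$, note that the number of terms in any single rule equals the number of internal nodes on its root-to-leaf path, i.e. the depth of its leaf. The maximum depth of a full binary tree with at most $N$ leaves is $N-1$ (attained by the maximally unbalanced ``caterpillar'' tree, in which each internal node has one leaf child and one internal child), so every rule has at most $N-1$ terms and $\big| \mathcal{T}_\text{max}(R) \big| \le N-1$.

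I expect the main obstacle to be the leaf-counting step rather than the combinatorics: one must argue cleanly that the learning algorithm never materialises a leaf containing zero samples and that the tree is genuinely a \emph{full} binary tree, since both facts underpin the identities $\#\text{leaves} \le N$ and $\#\text{internal nodes} = \#\text{leaves} - 1$. Once the ``at most $N$ leaves'' and ``full binary tree'' properties are secured, both inequalities follow immediately from the two-terms-per-internal-node bookkeeping (for the first bound) and the one-term-per-edge depth argument (for the second).
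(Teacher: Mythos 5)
Your proposal is correct and follows essentially the same route as the paper's own proof: bound the number of leaves by $N$ (one nonempty, disjoint sample subset per leaf), deduce at most $N-1$ split nodes from the binary-tree structure, and then count at most two unique terms per split node for the whole rule set and at most one term per split node along any root-to-leaf path for a single rule. The only cosmetic differences are that you invoke the full-binary-tree identity $\#\text{leaves} = \#\text{internal nodes} + 1$ explicitly and phrase the per-rule bound as a maximum-depth statement, while the paper argues the same facts by contradiction and per-path term counting; the paper additionally remarks that both bounds are tight, which your argument does not address but the lemma does not require.
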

\begin{proof}
    First, notice that a tree generated by any top-down tree induction algorithm cannot have more than $N$ leaf nodes. This is because, in the worst-case scenario, any top-down algorithm may reach its base case $N$ times with each leaf having one training sample in it. Therefore, this implies that a tree cannot have more than $(N - 1)$ split nodes in it. If this would not be true, then it is trivial to see that the binary tree would have to have at least $N + 1$ leaf nodes, a contradiction to our previous observation.

    \noindent Furthermore, notice that during rule induction each split node can contribute with at most two unique terms to the rule set (one per branch) and with at most one term to a single rule (one per root-to-leaf path). Because of this, a single rule cannot have more than $(N - 1)$ unique terms in it and the rule set cannot have more than $2(N - 1)$ unique total terms in it. Finally, note that these bounds can be tight. This can be seen in the decision tree shown in Figure~\ref{fig:tree_max_terms} where each split node must be unique as no gains can be obtained from partitioning the training set twice in the same way in any given path.
  
    \begin{figure*}
        \centering
        \includegraphics[width=0.50\textwidth]{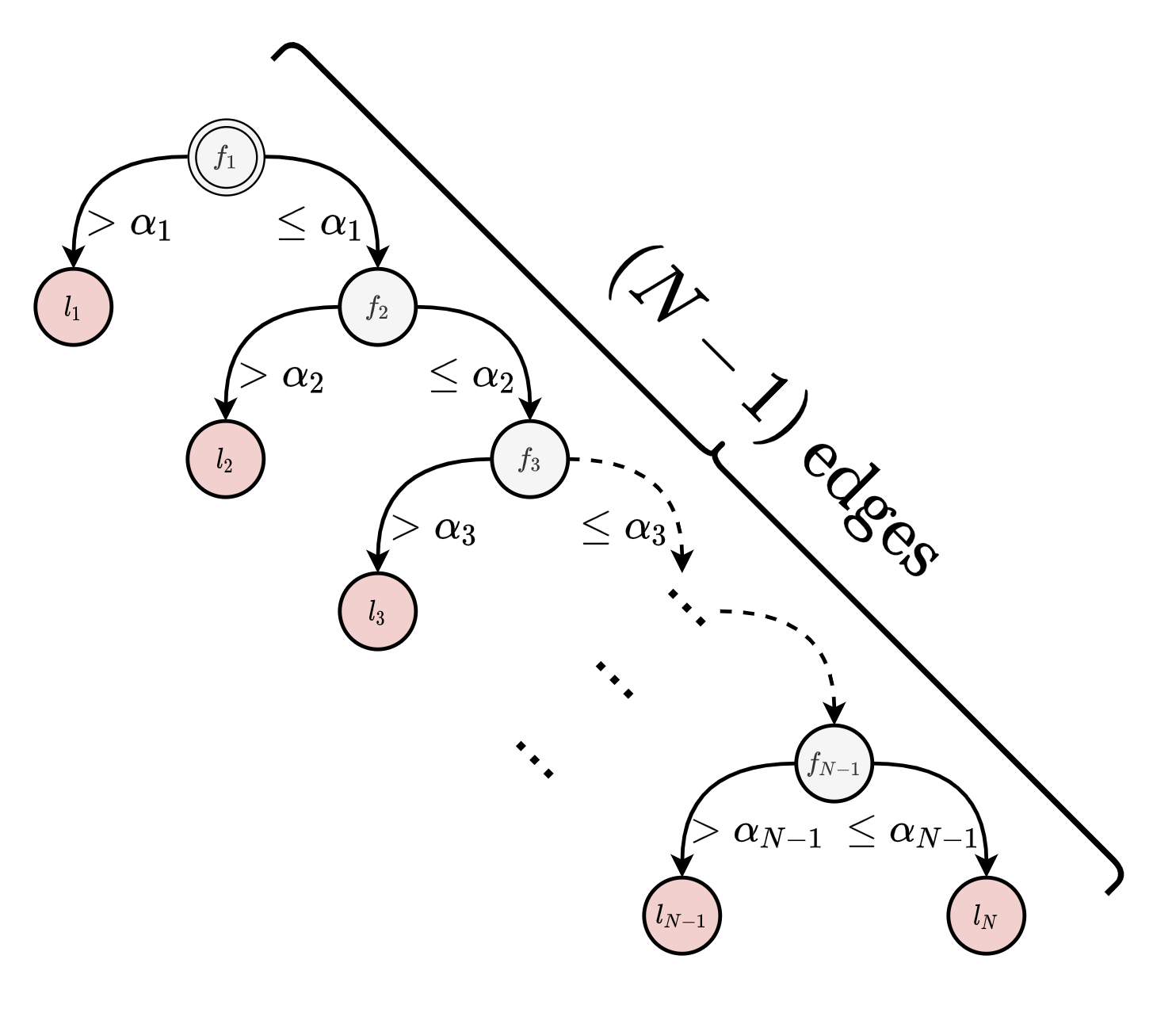}
        \caption{Depiction of an imbalanced decision tree with $N$ leaves and a path with length $(N - 1)$.}
        \label{fig:tree_max_terms}
    \end{figure*}
    
\end{proof}

This lemma can be extended to provide an asymptotic bound on the number of rules induced from a decision tree:
\begin{lemma}
    \label{lemma:dt_num_rules}
    If $R$ is a rule set induced from an impurity-based top-down decision tree trained on $N$ data samples, then it must be the case that $|R| = \mathcal{O}(N)$.
\end{lemma}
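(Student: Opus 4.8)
The plan is to reduce this directly to the structural facts already established in the proof of Lemma~\ref{lemma:dt_num_terms}. The central observation there was that a top-down recursive tree induction algorithm cannot produce a tree with more than $N$ leaf nodes, since in the worst case the recursion bottoms out $N$ times with a single training sample per leaf. I would reuse exactly this fact as the starting point, so no new combinatorial argument about the tree is needed.

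The one new ingredient I would make explicit is the correspondence between rules and leaves. In standard rule induction from a decision tree, each rule in $R$ is generated by tracing a single root-to-leaf path and conjoining the split conditions encountered along that path into the rule's premise, with the leaf's class becoming the conclusion. This gives a bijection between root-to-leaf paths (equivalently, leaves) and rules in $R$, so that $|R|$ equals the number of leaves of the tree.

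Combining these two steps yields $|R| \le N$ immediately, and therefore $|R| = \mathcal{O}(N)$. I would phrase the conclusion so that the asymptotic claim is clearly an upper bound in $N$ alone, independent of the dimensionality or the particular impurity criterion used, since the leaf-count bound holds for any top-down scheme.

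There is essentially no hard part here: the lemma is a corollary of the leaf-count bound from Lemma~\ref{lemma:dt_num_terms}, and the only thing requiring a sentence of care is stating the leaf-to-rule bijection cleanly so that counting leaves and counting rules are justified as the same quantity. The remaining work is purely a matter of citing the earlier observation rather than re-deriving it.
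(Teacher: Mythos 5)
Your proof is correct and follows exactly the paper's own argument: it cites the at-most-$N$ leaves bound from the proof of Lemma~\ref{lemma:dt_num_terms} and then uses the one-to-one correspondence between leaves and induced rules to conclude $|R| = \mathcal{O}(N)$. The only difference is that you spell out the leaf-to-rule bijection slightly more explicitly, which is a harmless (and arguably welcome) addition.
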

\begin{proof}
    An immediate result of our proof of lemma~\ref{lemma:dt_num_terms} is that the number of leaf nodes in a decision tree grows as $\mathcal{O}(N)$. Therefore, because after rule induction there is a one-to-one correspondence between leaves in the decision tree and rules in the output rule set, the number of rules induced from a decision tree must also grow as $\mathcal{O}(N)$.
\end{proof}

Having built an understanding of how rule induction is affected by its training data, we now use these results to analyse ECLAIRE's asymptotic behaviour.

\setcounter{theorem}{0} 
\begin{theorem}[ECLAIRE Runtime Complexity]
Assume ECLAIRE's intermediate rule extraction algorithm $\psi(\cdot)$ operates by inducing rules from a decision tree that was learnt using a top-down impurity-based algorithm (e.g., C5.0). Furthermore, assume that, when trained with $N$ $m$-dimensional samples, $\psi(\cdot)$'s runtime grows as $\mathcal{O}(N^{p_n} m^{p_m})$, for some $p_n, p_m \in \mathbb{N}$. Given a training set with $N$ samples and a neural network with $d$ hidden layers, such that there are at most $m$ activations in any of its layers, ECLAIRE's runtime will grow as a function of $\mathcal{O}\Big( d N^{\text{max}(3, p_n + 1)} m^{p_m} \Big)$.

\end{theorem}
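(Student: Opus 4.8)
The plan is to account for the cost of each line of Algorithm~\ref{alg:ECLAIRE} and then aggregate over the two nested loops, repeatedly invoking Lemmas~\ref{lemma:dt_num_terms} and~\ref{lemma:dt_num_rules} to bound both the number of rules in each intermediate rule set and the lengths of the individual premises those rules contain. Since the outer loop (lines 3--14) runs exactly $d$ times, it suffices to bound the work done in a single iteration and multiply by $d$ at the end; the initial label computation on line 2 consists of $N$ forward passes and will be absorbed into the dominant term.

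First I would bound line 5: by assumption the call to $\psi(\cdot)$ on the $m$-dimensional activations $h_i(\mathbf{x})$ costs $\mathcal{O}(N^{p_n} m^{p_m})$, and by Lemma~\ref{lemma:dt_num_rules} the resulting set $R_{h_i \mapsto \hat{y}}$ contains only $\mathcal{O}(N)$ rules, so the inner loop (lines 6--13) executes $\mathcal{O}(N)$ times. Within one inner iteration there are three contributions to track: (i) evaluating the premise $p$ on all $N$ samples (line 8), which by Lemma~\ref{lemma:dt_num_terms} is a conjunction of at most $(N-1)$ terms and hence costs $\mathcal{O}(N^2)$; (ii) the second call to $\psi(\cdot)$ on line 9, now on the input features, costing a further $\mathcal{O}(N^{p_n} m^{p_m})$ and, again by Lemma~\ref{lemma:dt_num_rules}, yielding $\mathcal{O}(N)$ candidate clauses; and (iii) the clause-copying loop of lines 10--12, which appends $\mathcal{O}(N)$ rules, each a clause of length at most $(N-1)$ by Lemma~\ref{lemma:dt_num_terms}, for another $\mathcal{O}(N^2)$.

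Summing these, a single inner iteration costs $\mathcal{O}(N^2 + N^{p_n} m^{p_m})$, and multiplying by the $\mathcal{O}(N)$ iterations of the inner loop gives $\mathcal{O}(N^3 + N^{p_n + 1} m^{p_m})$ for one pass of the outer loop (the isolated line-5 term $N^{p_n} m^{p_m}$ is dominated). It then remains to fold this sum into a single monomial, which I would do by a case split on $p_n$: when $p_n \ge 2$ the term $N^{p_n+1} m^{p_m}$ dominates $N^3$, whereas when $p_n \le 1$ we have $N^3 \le N^3 m^{p_m}$ (using $m \ge 1$) and $N^{p_n+1} m^{p_m} \le N^3 m^{p_m}$ (using $p_n+1 \le 3$); in both cases the per-layer cost is $\mathcal{O}\big(N^{\max(3,\,p_n+1)} m^{p_m}\big)$. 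Multiplying by the $d$ outer iterations yields the claimed bound $\mathcal{O}\big(d\, N^{\max(3,\,p_n+1)} m^{p_m}\big)$.

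The main obstacle I anticipate is the careful bookkeeping of rule \emph{counts} versus rule \emph{lengths}: the cubic factor $N^3$ arises not from the rule extractor itself but from the product of ``$\mathcal{O}(N)$ intermediate rules'' with the ``$\mathcal{O}(N^2)$'' cost of evaluating or copying premises whose length is itself $\mathcal{O}(N)$, and it is easy to drop one of these factors if Lemmas~\ref{lemma:dt_num_terms} and~\ref{lemma:dt_num_rules} are not applied at precisely the right places. A secondary subtlety is confirming that the forward-pass cost incurred on lines 2 and 4 is genuinely dominated by the rule-extraction terms, which I would handle by noting it is at most $\mathcal{O}(d N m^2)$ and absorbing it under the mild assumption that it does not exceed the $N^{\max(3,\,p_n+1)} m^{p_m}$ term.
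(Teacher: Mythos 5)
Your proof is correct and follows essentially the same route as the paper's: a per-layer accounting that bounds the inner loop by $\mathcal{O}(N)$ iterations via Lemma~\ref{lemma:dt_num_rules}, charges $\mathcal{O}(N^2)$ for premise evaluation and clause copying via Lemma~\ref{lemma:dt_num_terms} and $\mathcal{O}(N^{p_n} m^{p_m})$ for each call to $\psi(\cdot)$, giving $\mathcal{O}\big(N^3 + N^{p_n+1} m^{p_m}\big)$ per hidden layer and the claimed bound after multiplying by $d$. Your explicit case split on $p_n$ when folding the sum into the single monomial $N^{\max(3,\,p_n+1)} m^{p_m}$, and your acknowledgement of the forward-pass cost on lines 2 and 4 (which the paper's proof silently omits), are minor points of added care but do not change the argument.
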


\begin{proof}

    We begin our proof with a high-level summary of ECLAIRE's approach: for each layer $i$ in our network, ECLAIRE extracts an intermediate rule set $R_{h_i \mapsto \hat{y}}$ using rule extractor $\psi(\cdot)$ on intermediate activations $h_i$ and the labels predicted by the input DNN. It then iterates over all premises in $R_{h_i \mapsto \hat{y}}$, extracts a rule set mapping input activations $\mathbf{x}$ to each premise's truth value, and adds the relevant rules into the output set. This procedure implies that the total amount of work done by ECLAIRE to process hidden layer $i$ grows asymptotically as
    \begin{align*}
        \mathcal{O}\Bigg( &(\text{complexity of } \psi(\cdot) \text{ when trained with } N \text{ samples and } m \text{ features}) \\
        &+ (\text{number of rules in } R_{h_i \mapsto \hat{y}}) \times (\text{complexity of evaluating a rule in } R_{h_i \mapsto \hat{y}} \text{ } N \text{ times}) \\
        &+ (\text{number of rules in } R_{h_i \mapsto \hat{y}}) \times (\text{complexity of } \psi(\cdot) \text{ when trained with } N \text{ samples and } m \text{ features}) \\
        &+ (\text{number of rules in } R_{h_i \mapsto \hat{y}}) \times (\text{complexity of post-processing and adding new rules}) \Bigg)
    \end{align*}
    \noindent Which can be rewritten as:
    \begin{align*}
        \mathcal{O}\Bigg( (N^{p_n} m^{p_m}) + \big| R_{h_i \mapsto \hat{y}} \big| \times \Big(
        \big| \mathcal{T}_{max}\big( R_{h_i \mapsto \hat{y}} \big) \big| N
        + (N^{p_n} m^{p_m})
        + \big| \mathcal{T}_\text{max}\big( I^{(t)}_{\mathbf{x} \mapsto p} \big) \big| \big| I^{(r)}_{\mathbf{x} \mapsto p\prime} \big| \Big) \Bigg)
    \end{align*}
    where we define $I^{(t)}_{\mathbf{x} \mapsto p}$ to be the temporary rule set (approximating premise $p$) with the longest rule and we define $I^{(r)}_{\mathbf{x} \mapsto p\prime}$ as the temporary extracted rule set with the largest number of rules in it. Note that we used the fact that, by assumption, $\psi(\cdot)$'s worst case runtime grows as $\mathcal{O}(N^{p_n} m^{p_m})$. Applying lemmas \ref{lemma:dt_num_terms} and \ref{lemma:dt_num_rules} to this expression gives us
    \begin{align*}
        \mathcal{O}\Bigg( (N^{p_n} m^{p_m}) + N \times \Big(
        (N - 1) N
        + (N^{p_n} m^{p_m})
        + (N - 1) N \Big) \Bigg)
    \end{align*}
    \noindent Finally, using the fact that $(N - 1) = \mathcal{O}(N)$ and $ (N^3) = \mathcal{O}(N^{\text{max}(3, p_n + 1)} m^{p_m})$ as both $p_n$ and $p_m$ are natural numbers, and recalling that ECLAIRE performs this much work for each hidden layer, we get that ECLAIRE's total runtime grows as a function of
    \begin{align*}
        \mathcal{O}\Bigg( d\big((N^{p_n} m^{p_m}) + 2(N^3) + (N^{p_n + 1} m^{p_m}) \big) \Bigg) = \mathcal{O}\Bigg( d N^{\text{max}(3, p_n + 1)} m^{p_m} \Bigg)
    \end{align*}
\end{proof}

\section{Dataset Details}
\label{appendix:dataset_details}
In this section we include a brief description of all the non-synthetic datasets used for the tasks described in Section~\ref{sec:experiments}.

\textbf{METABRIC} \cite{dataset_metabric} \hspace{2pt} This dataset consists of a collection of anonymized features extracted from breast cancer tumours in a cohort of $1,980$ patients. It includes clinical traits, gene expression patterns, tumour characteristics, and survival rates for a period of 4 years. The specific tasks we consider in METABRIC, taken from \cite{rem}, are:

\begin{itemize}
    \item $\textbf{Immunohistochemical subtype prediction (MB-ER):}$ for this task we predict immunohistochemical subtypes in breast cancer patients using 1000 mRNA expression patterns. Each sample can be classified as one of two types, ER+ or ER-, which are crucial for determining a patient's treatment.
    \item $\textbf{Histological tumour subtype prediction (MB-HIST):}$ for this task we predict histological subtypes of breast cancer tumours using 1004 mRNA expression profiles. Each sample can be classified as either Invasive Lobular Carcinoma (ILC) or Invasive Ductal Carcinoma (IDC), two most common breast cancer histological subtypes \cite{rem}.
\end{itemize}

\textbf{MAGIC} \cite{dataset_magic, dataset_uci_ml_repository} \hspace{2pt} This is a particle physics dataset in which a signal needs to be classified as being a high-energy gamma ray or some background hadron cosmic radiation. Each of the 19,020 training samples consists of 10 real-valued features that are generated via a Monte Carlo program and a binary label indicating whether the observation corresponds as a high-energy gamma ray (signal) or some background hadron radiation.

\textbf{MiniBooNE} \cite{dataset_miniboone_detector, dataset_uci_ml_repository} \hspace{2pt} This is a particle physics dataset in which one is interested in discriminating electron neutrino events from background events in interactions collected in the MiniBooNE experiment. Each of the 130,065 training samples consists of 50 real-valued features empirically collected in the MiniBooNE experiment and a binary label indicating whether the sample represents a electron neutrino event (signal) or a background event.

\textbf{Letter Recognition} \cite{dataset_letter, dataset_uci_ml_repository} \hspace{2pt} This dataset consists of $20,000$ representations of black-and-white English capital letters labelled with one of 26 classes (A to Z). Each sample is generated by extracting 16 statistical features from the images of each letter.

A summary of the properties of all tasks spanning from these datasets can be found in Table~\ref{table:dataset_summary}. 

\begin{table}[!htbp]
    \caption{Summary of classification tasks used for evaluation.}
    \label{table:dataset_summary}
    \centering
    \resizebox{\textwidth}{!}{
    
        \begin{tabular}{c|ccccc}
            \toprule
            \textbf{Dataset} & \textbf{Samples} & \textbf{Classes} & \textbf{Features} & \textbf{Majority Class (\%)} & \textbf{Domain} \\ \hline
            XOR \cite{symbolic_meta_learning, learning_to_explain} & 1,000   & 2  & 10   & 52.6 & Synthetic               \\
            MB-ER \cite{dataset_metabric, rem}    & 1,980   & 2  & 1,000 & 76   & Healthcare              \\
            MB-HIST \cite{dataset_metabric, rem}  & 1,695   & 2  & 1,004 & 91.3 & Healthcare              \\
            MAGIC \cite{dataset_magic, dataset_uci_ml_repository} & 19,020  & 2  & 10   & 64.8 & Particle Physics        \\
            MiniBooNE \cite{dataset_miniboone_experiment, dataset_uci_ml_repository} & 130,065 & 2  & 50   & 71.9 & Particle Physics        \\
            Letter Recognition \cite{dataset_letter, dataset_uci_ml_repository} & 20,000  & 26 & 16   & 11.7 & Recognition \\ \bottomrule
        \end{tabular}
    }
\end{table}

\section{Model \& Hyperparameter Selection} 
\label{appendix:experiment_details}

\subsection{Model Selection Details}

For each of our tasks, we select a DNN architecture by iterating over several possible architectures and selecting the one with the highest testing performance for rule extraction. For the sake of simplicity, this search process consists of first fixing the number of hidden layers in the network for each task, and then performing a grid-search over different hyperparameters of MLPs with that many hidden layers. In our search, we allow intermediate activations sizes to be chosen from $\{8, 16, 32, 64, 128\}$ and activation functions to be chosen from $\{\texttt{tanh}, \texttt{relu}, \texttt{elu}\}$. Furthermore, to constrain the search space further, we force the number of activations in hidden layer $(i + i)$ to be less than the number of activations in layer $i$. This goes in line with several traditional encoding architectures \cite{dnn_architecture_survey}.

Unless specified otherwise, we train each model to minimise its weighted classification cross-entropy loss for $\text{epochs} \in \{50, 100, 150, 200\}$ using stochastic gradient descent \cite{sgd} (batch size in $\{ 16, 32\}$) and use an Adam optimiser \cite{optimizer_adam} with its default parameters ($\beta_1 = 0.9$, $\beta_2 = 0.999$, $\epsilon = 10^{-7}$, and $\textit{lr}=10^{-3}$) for computing weight updates. To speed this process up, we made use of GPU clusters offered by Google's colab services.

\subsection{Rule Extraction Fine Tuning}
\label{sec:appendix-finetune}

For all of the rule extraction methods we experiment with in our evaluation, we attempt several values for the minimum number of samples per split (i.e., $\mu$) and report only on the rule set that performed the best on the testing dataset. Because this search is very task-specific (due to different dataset sizes and/or architectures), in all of our tasks we define three evaluation parameters for each rule extraction algorithm, namely $\mu_\text{min}$, $\mu_\text{max}$, and $\delta\mu$, and evaluate the performance of rules extracted via that method using values of $\mu \in \{ \mu_\text{min}, (\mu_\text{min} + \delta\mu), (\mu_\text{min} + 2\delta\mu), \cdots, \mu_\text{max}\}$. The values defining the search space for each algorithm in our different tasks are described in the following section.

\subsection{Task-specific Configurations}
In this section we describe the configuration that our grid search produces for each of the tasks we discuss in Section~\ref{sec:experiments}. We also include a description of the search space used for $\mu$ when evaluating different rule extraction methods.

\subsubsection{XOR}
\label{sec:appendix-xor}
Given the simplicity of the XOR task, we constrain the architecture search space to be only over MLPs using 3 hidden layers. This results in the best performing model having hidden layers with sizes \{64, 32, 16\} and $\texttt{tanh}$ activation between them. This model is then trained for 150 epochs using a batch size of 16.

When fine-tuning the different rule extractors, we use $\mu_\text{min} = 2$, $\mu_\text{max} = 15$, and $\delta\mu = 1$ for ECLAIRE, C5.0, and PedC5.0 as they all terminate relatively quickly and obtain good performance without much pruning. REM-D and DeepRED\textsuperscript{*}, however, fail to terminate before their allotted times when using values of $\mu$ below $25$. Because of this, we evaluate them using values of $\mu$ in the range defined by $\mu_\text{min} = 25$, $\mu_\text{max} = 35$, and $\delta\mu = 1$.

\subsubsection{METABRIC}
\label{sec:appendix-metabric}
 For both METABRIC tasks described in Section~\ref{sec:experimental-medical-applications}, we use the exact same architecture and training process used by Shams et al. in \cite{rem}, which was determined through a similar grid search process. This architecture consists of an MLP with hidden layers with sizes $\{128, 16\}$ and \texttt{tanh} activations in between them. For both tasks we train the model for 150 epochs using a batch size of 16.

In our rule extraction fine-tuning, we use $\mu_\text{min} = 2$, $\mu_\text{max} = 15$, and $\delta\mu = 1$ for C5.0, PedC5.0, and ECLAIRE. As suggested by \cite{rem}, and due to their longer run times, we use $\mu_\text{min} = 5$, $\mu_\text{max} = 15$, and $\delta\mu = 5$ for REM-D and DeepRED\textsuperscript{*}.

\subsubsection{MAGIC}
\label{sec:appendix-magic}
In the MAGIC dataset results reported in Section~\ref{sec:experimental-physics-scalability}, we search over architectures with 3 hidden layers and our grid-search results in the best model having layers of size \{64, 32, 16\} with ReLU activations in between them. The best training configuration found is then trained for 200 epochs with a batch size of 32.

In our rule extraction fine-tuning, for ECLAIRE we use $\mu_\text{min} = 50$, $\mu_\text{max} = 200$, and $\delta\mu = 25$ while for both REM-D and DeepRED\textsuperscript{*} we use $\mu_\text{min} = 500$, $\mu_\text{max} = 1000$, and $\delta\mu = 50$. Note that we use large values of $\mu$ for REM-D and DeepRED\textsuperscript{*} as we found it extremely hard to get reasonable extraction times when using less than 500 samples for $\mu$. Finally, for C5.0 and PedC5.0 we search over all $\mu$ values defined by $\mu_\text{min} = 5$, $\mu_\text{max} = 50$, and $\delta\mu = 5$.

\subsubsection{MiniBooNE}
\label{sec:appendix-miniboone}
We found the MiniBooNE task to require the most capacity to obtain good results compared to others experiments in this paper. We force our model architecture to use 5 hidden layers and search for models trained with $\text{epochs} \in \{ 20, 30, 40\}$ given the large training size. This gives us a best performing architecture that uses hidden units \{128, 64, 32, 16, 8\} with an ELU activation in between them which is trained for 30 epochs with a batch size of 16.

In our rule extraction fine-tuning, this task proved to be more complicated than the rest given its training size. For ECLAIRE we use $\mu_\text{min} = 0.0005N$, $\mu_\text{max} = 0.0015N$, and $\delta\mu = 0.0001N$ (where $N$ is the number of training samples) as values below $0.0005N$ result extraction times longer than $6$ hours. For both REM-D and DeepRED\textsuperscript{*}, we increase the minimum value of $\mu$ significantly to get runs that terminate in their allotted times and use $\mu_\text{min} = 0.02N$, $\mu_\text{max} = 0.1N$, and $\delta\mu = 0.005N$. Finally, given their fast extraction times, for C5.0 and PedC5.0 we search over all $\mu$ values defined by $\mu_\text{min} = 5$, $\mu_\text{max} = 50$, and $\delta\mu = 5$.

\subsubsection{Letter Recognition}
\label{sec:appendix-letter}
For the results we report on the Letter Recognition dataset in Section~\ref{sec:experimental-multi-class}, we search over architectures with 2 hidden layers and obtain a best performing model that has layers of size \{128, 64\} with ELU activations in between them. The best training configuration is trained for 150 epochs with a batch size of 32. 

In our rule extraction fine-tuning, we use $\mu_\text{min} = 5$, $\mu_\text{max} = 15$, and $\delta\mu = 1$ for ECLAIRE, C5.0, and PedC5.0. As in the physics datasets, both DeepRED\textsuperscript{*} and REM-D struggle to terminate in a reasonable amount of time unless $\mu$ is significantly high. This is exacerbated by the large number of classes in this task. Because of this, we use $\mu_\text{min} = 0.25N$, $\mu_\text{max} = 0.5N$, and $\delta\mu = 0.05N$ during their fine-tuning process.


\textbf{CART as Intermediate Rule Extractor in Letters} \hspace{0.5 em} While in all of our binary tasks we are able to construct high-performing rule sets with C5.0, we fail to observe this same trend in the Letters dataset, our task of choice for multi-class evaluation. More specifically, Table~\ref{table:all_results} shows that end-to-end C5.0 rule sets are unable to achieve a high performance compared to that previously reported for other induction algorithms \cite{letter_dt_results}. Therefore, in this section we explore the use of CART~\cite{dt_cart} trees for intermediate rule induction and show these results in Table~\ref{table:letter_cart_results}. In these experiments, we compare the performance of rule sets induced from end-to-end CART trees, as well as rule sets induced from CART trees learnt from data that was labelled using the DNN's predictions (which we refer to as \textit{PedCART}), against that of rule sets extracted with ECLAIRE when CART is used as its intermediate rule extractor. Furthermore, for the sake of obtaining a fair comparison between our baselines that is unbiased with respect to the choice of rule extraction algorithm, we also compare ECLAIRE against versions of both REM-D and DeepRED that use CART as an intermediate rule extractor. For clarity, we refer to these versions of our baselines as $\text{ECLAIRE}_\text{CART}$, $\text{REM-D}_\text{CART}$, and $\text{DeepRED}_\text{CART}$, respectively.

In all of these experiments, we control the growth of CART-generated trees using Cost Complexity Pruning (CCP) \cite{dt_cost_complexity} and by varying the number of minimum samples per split $\mu$ as in our previous tasks. For CART, PedCART, and $\text{ECLAIRE}_\text{CART}$ we search over the spectrum defined by $\mu_\text{min} = 0.0001 N$, $\mu_\text{max} = 0.0051 N$, and $\delta\mu = 0.0005 N$. As it was the case when using C5.0 as an intermediate rule extractor, for both $\text{DeepRED}_\text{CART}$ and $\text{REM-D}_\text{CART}$ we require significantly higher values of $\mu$ to terminate in the allotted time. Because of this, we limit our search over $\mu$ to be over the spectrum defined by $\mu_\text{min} = 0.1N$, $\mu_\text{max} = 0.3N$, and $\delta\mu = 0.05N$. This results in overpruned rule sets in $\text{DeepRED}_\text{CART}$ which, although small in size, take orders of magnitude more time to extract than those generated by ECLAIRE.

\begin{table}[!htbp]
    \centering
    \caption{Results of extracting rules from the model used in Section~\ref{sec:experimental-multi-class} in the Letters dataset when using CART as an intermediate rule extractor.}
    \label{table:letter_cart_results}
    \resizebox{\textwidth}{!}{
        \begin{tabular}{ccccccc} 
        \multicolumn{1}{c|}{\textbf{Method}} &
          \textbf{Accuracy (\%)} &
          \textbf{Fidelity (\%)} &
          \textbf{Runtime (s)} &
          \textbf{Memory (MB)} &
          \textbf{Rule set size} &
          \textbf{Avg Rule Length} \\ \hline \hline
        \multicolumn{1}{c|}{CART$(\mu = 0.0001N$)}        & 86 $\pm$ 0.4 & N/A        & 0.89 $\pm$ 0.05       & 3,216.94 $\pm$ 55.05     & 909.2 $\pm$ 18.54       & 11.65 $\pm$ 0.04  \\
        \multicolumn{1}{c|}{PedCART$(\mu = 0.0001N)$}     & 86.3 $\pm$ 0.5 & 85.2 $\pm$ 0.5 & 1.7 $\pm$ 0.1     & 4,122.42 $\pm$ 226.65    & 1,093.6 $\pm$ 19.23        & 11.89 $\pm$ 0.1   \\ \hline
        \multicolumn{1}{c|}{$\text{DeepRED}_\text{CART}(\mu = 0.2 N)$}     & 8.9 $\pm$ 1 & 9 $\pm$ 1 & 3,901.35 $\pm$ 155.47     & \textbf{601.02 $\pm$ 262.75}    & \textbf{66 $\pm$ 6}        & \textbf{3.08 $\pm$ 0.21}   \\
        \multicolumn{1}{c|}{$\text{REM-D}_\text{CART}(\mu = 0.1 N)$}     & 11.1 $\pm$ 2.2 & 11.1 $\pm$ 2.3 & 1,851.07 $\pm$ 1859.9     & 25,992.66 $\pm$ 7,527.34    & 8,835.6 $\pm$ 2,399.13        & 11.08 $\pm$ 0.34   \\
        \multicolumn{1}{c|}{$\text{ECLAIRE}_\text{CART}(\mu = 0.0001 N)$}     & 85.9 $\pm$ 0.5 & 85.3 $\pm$ 0.7 & \textbf{100.18 $\pm$ 8.17}    & 13,629.1 $\pm$ 393.97    & 3,706 $\pm$ 127.47     & 12.34 $\pm$ 0.18  \\
        \multicolumn{1}{c|}{$\text{ECLAIRE}_\text{CART}^*(\mu = 0.0001 N)$}     & \textbf{89.8 $\pm$ 0.4} & \textbf{88.8 $\pm$ 0.4} & 107.92  $\pm$ 16.46    & 16,874.72 $\pm$ 282    & 4,799.6 $\pm$ 120.85     & 12.16 $\pm$ 0.14  \\ \bottomrule
        \end{tabular}
    }
\end{table}

Our results show a significant increase in accuracy when using end-to-end CART trees for rule induction over C5.0 trees (86\% $\pm$ 0.4\% vs 63.1\% $\pm$ 2.5\%). However, we note that this increase comes with a significant rise in the number of rules extracted from CART trees compared to rule sets extracted from C5.0 trees. Regardless, our results also show that the same relative ranking we observe across our baselines in Table~\ref{table:all_results} holds when using CART as intermediate rule extractor: both $\text{ECLAIRE}_\text{CART}$ and $\text{ECLAIRE}_\text{CART}^*$ are able to extract rule sets with higher fidelity than those extracted by PedCART, while $\text{ECLAIRE}_\text{CART}^*$ extracts rule sets that achieve a higher predictive accuracy than those extracted by both end-to-end CART trees and PedCART. Similarly, we observe that $\text{DeepRED}_\text{CART}$ and $\text{REM-D}_\text{CART}$ are unable to extract rule sets that perform better than random.

\section{DeepRED's and REM-D's Substitution Step}
\label{appendix:rem_d_substitution}

\begin{figure}[!htbp]
    \centering
    \includegraphics[width=\textwidth]{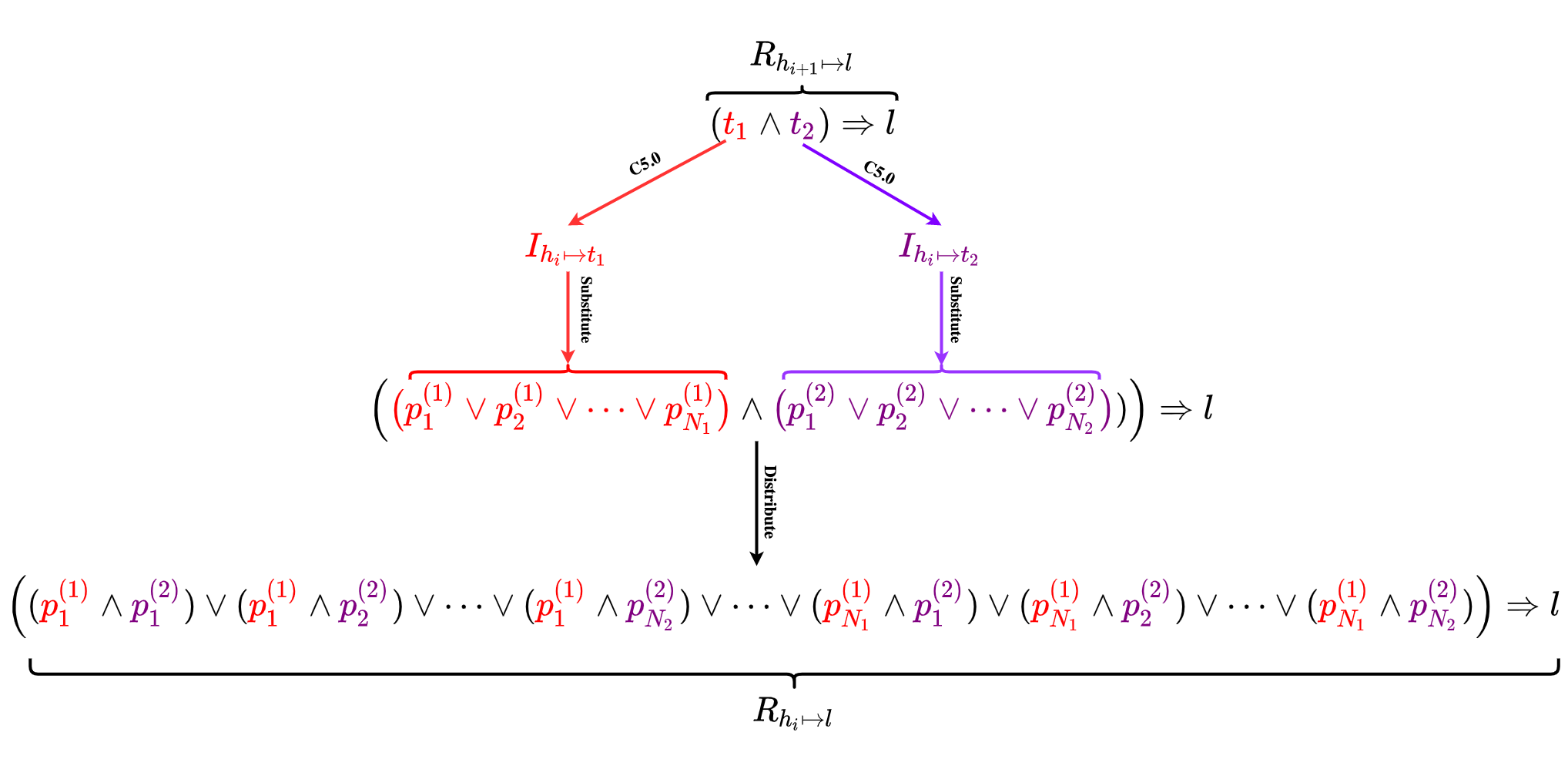}
    \caption{Visual representation of REM-D's and DeepRED's substitution step showing an explosive growth. In this example, REM-D/DeepRED is substituting all terms inside an intermediate rule set $R_{h_{i + 1} \mapsto l}$ which contains a single rule $(t_1 \wedge t_2) \Rightarrow l$ whose premise has two different terms $t_1$ and $t_2$. The algorithm produces temporary rule sets $I_{h_i \mapsto t_k}$ to map activations in layer $h_i$ to the binary truth value of term $t_k$. It then replaces each term $t_k$ in $R_{h_{i + 1} \mapsto l}$ with the set of rules in $I_{h_i \mapsto t_k}$ that have $\texttt{TRUE}$ as a conclusion. Similarly to Figure~\ref{fig:eclaire_substitution}, we let $p_j^{(k)}$ be the premise of the $j$-th rule in $I_{h_i \mapsto t_k}$ whose consequence is $\texttt{TRUE}$. For this example, we assume that the number of premises with $\texttt{TRUE}$ as a conclusion in $I_{h_i \mapsto t_1}$ and $I_{h_i \mapsto t_2}$ is $N_1$ and $N_2$, respectively. This results in rule set $R_{h_{i} \mapsto l}$ having $N_1  N_2$ rules after substitution.
    }
    \label{fig:rem_d_substitution}
\end{figure}

\section{Growth Coping Mechanisms}
\label{appendix:coping_mechanisms}

Although ECLAIRE can scale to large datasets and models, its cubic growth factor implies that it can suffer from slow extraction times. In this section, we include results of experiments in which we explore whether it is possible to alleviate these scaling issues without significantly sacrificing performance. Specifically, we explore four different mechanisms:

\subsubsection*{Intermediate Rule Pruning}

The simplest mechanism for controlling both runtime and comprehensibility is to constrain the size of intermediate rule sets by increasing the value of $\mu$. As seen in Figure~\ref{fig:miniboone_ablation}, varying this parameter can result in a drastic drop in the number of output rules. However, this may come at the cost of fidelity. This behaviour underlines one of the biggest limitations of ECLAIRE: its sensitivity to $\mu$ makes ECLAIRE hard to use in large tasks without the need for extensive fine-tuning. Nevertheless, in most tasks we observe that values of $\mu$ in the $[10^{-5}N, 10^{-4}N]$ range tend to give good performance. Moreover, these experiments also suggest that if one can afford a small drop in performance in favour of a more comprehensible rule set, then increasing $\mu$ to be in the range $[2\times10^{-4}N, 5\times 10^{-4}N]$ can lead to a very comprehensible yet still high-performing rule set.

\begin{figure}[!htbp]
    \centering
    \begin{minipage}{0.325\linewidth}
        \centering
        \centering
        \includegraphics[width=\textwidth]{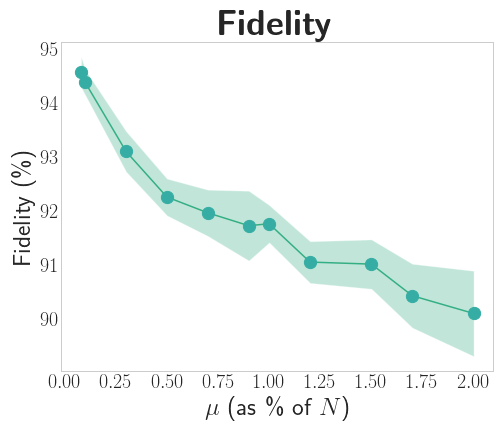}
        \label{fig:miniboone_ablation_fidelity}
    \end{minipage}
    \begin{minipage}{0.325\linewidth}
        \centering
        \includegraphics[width=\textwidth]{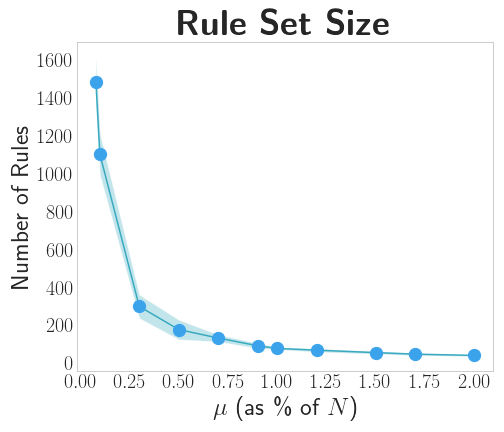}
        \label{fig:miniboone_ablation_ruleset_size}
    \end{minipage}
    \begin{minipage}{0.325\linewidth}
        \centering
        \includegraphics[width=\textwidth]{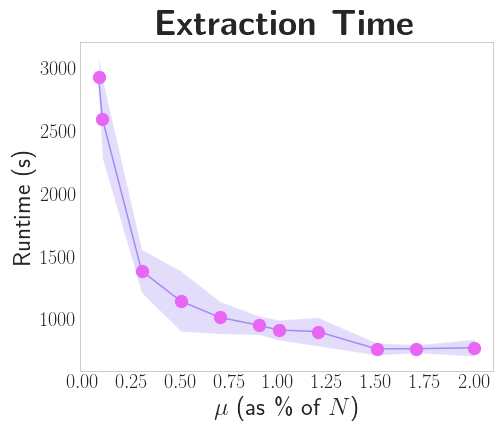}
        \label{fig:miniboone_ablation_runtime}
    \end{minipage}
    \caption{Effects of varying the minimum number of samples required for a split (i.e., $\mu$) in ECLAIRE for MiniBooNE (Section~\ref{sec:experimental-physics-scalability}). These plots, as those in all the figures that follow, are produced by averaging results over 5 folds and show the standard error in shading.
    }
    \label{fig:miniboone_ablation}
\end{figure}

\subsubsection*{Hidden Representation Subsampling}

When dealing with networks with multiple layers in them, one can alleviate ECLAIRE's scalability by extracting intermediate rule sets $R_{h_i \mapsto \hat{y}}$ only for a subset of hidden layers $i \in \mathcal{S} \subset \{1, 2, \dots, d \}$. This forces ECLAIRE to use fewer intermediate representations to build its rule set and results in the algorithm effectively operating on a DNN with $|\mathcal{S}| < d$ hidden layers in it. We explore this possibility in our MiniBooNE network by sampling intermediate layers with different frequencies. Our results, shown in Figure~\ref{fig:miniboone_hidden_sampling}, suggest that one can sample hidden layers at very high frequencies (i.e., every 3 hidden layers) at a very small cost to fidelity; all while halving both the runtime and rule set size. One may therefore benefit from subsampling hidden layers if the input network is deep enough to support this mechanism.

\begin{figure}[!htbp]
    \centering
    \begin{minipage}{0.32\linewidth}
        \centering
        \centering
        \includegraphics[width=\textwidth]{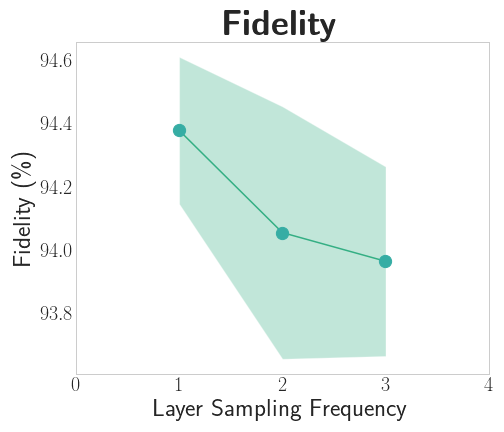}
        \label{fig:miniboone_hidden_sampling_fidelity}
    \end{minipage}
    \begin{minipage}{0.32\linewidth}
        \centering
        \includegraphics[width=\textwidth]{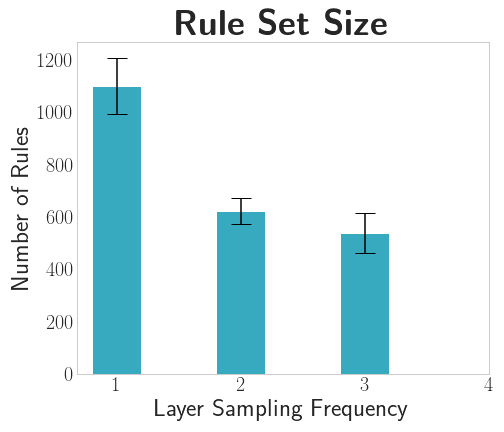}
        \label{fig:miniboone_hidden_sampling_ruleset_size}
    \end{minipage}
    \begin{minipage}{0.32\linewidth}
        \centering
        \includegraphics[width=\textwidth]{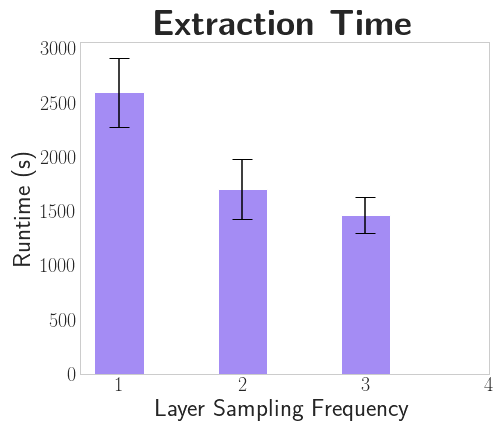}
        \label{fig:miniboone_hidden_sampling_runtime}
    \end{minipage}
    \caption{Effects of sampling hidden layers with a given frequency when constructing ECLAIRE's intermediate rule sets in MiniBooNE (Section~\ref{sec:experimental-physics-scalability}).}
    \label{fig:miniboone_hidden_sampling}
\end{figure}

\subsubsection*{Training Set Subsampling}

If the scalability issue comes from the number of training samples, one can use only a fraction of the available training data for ECLAIRE's rule construction. Our experiments in MiniBooNE show that ECLAIRE is very \textit{data efficient}, maintaining its performance in low-data regimes. These results, shown in Figure~\ref{fig:miniboone_subsampling}, indicate that one can subsample up to 50\% of the training data while incurring a very small drop in fidelity and reducing the extraction time in half. This behaviour is in line with empirical evidence by Zilke et al. suggesting that other decompositional methods can maintain their performance while using a fraction of the DNN's training set \cite{deepred}.

\begin{figure}[!htbp]
    \centering
    \begin{minipage}{0.32\linewidth}
        \centering
        \centering
        \includegraphics[width=\textwidth]{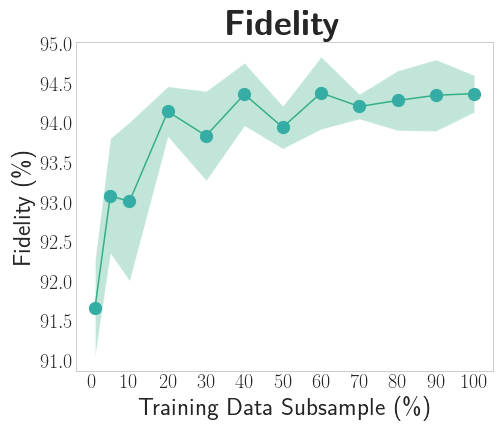}
        \label{fig:miniboone_subsampling_fidelity}
    \end{minipage}
    \begin{minipage}{0.32\linewidth}
        \centering
        \includegraphics[width=\textwidth]{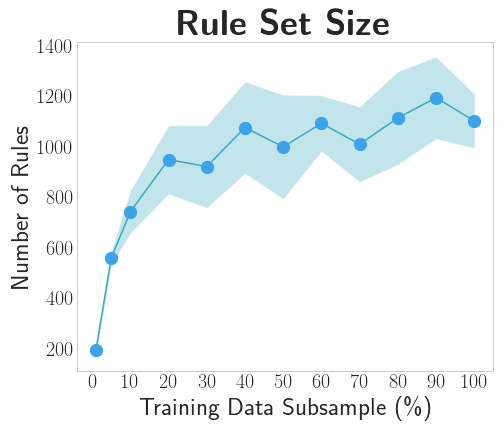}
        \label{fig:miniboone_subsampling_ruleset_size}
    \end{minipage}
    \begin{minipage}{0.32\linewidth}
        \centering
        \includegraphics[width=\textwidth]{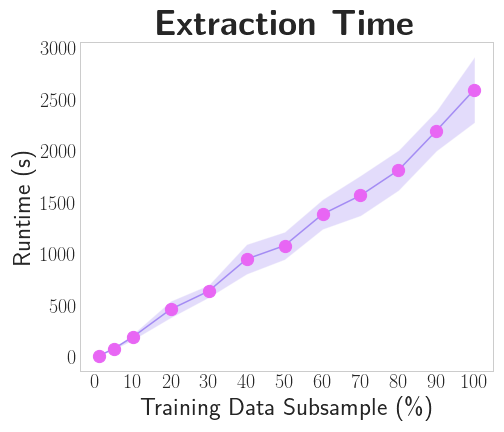}
        \label{fig:miniboone_subsampling_runtime}
    \end{minipage}
    \caption{Effects of subsampling the training set when extracting rules with ECLAIRE in MiniBooNE (Section~\ref{sec:experimental-physics-scalability}).}
    \label{fig:miniboone_subsampling}
\end{figure}

\subsubsection*{Intermediate Rule Ranking}

It has been previously observed that one can drop significant portions of a rule set while incurring only in small performance costs \cite{rem_dissertation, dt_ranking_hill_climbing}. Inspired by this observation, we explore whether ECLAIRE can benefit from dropping the lowest $p\%$ of rules in intermediate rule sets $R_{h_i \mapsto \hat{y}}$ as ranked by their confidence level\footnote{A rule's confidence is defined as the ratio between (a) the number of training samples that satisfy the rule's premise and have the same class as its conclusion and (b) the overall number of training samples that satisfy the rule's premise.}. Our results, shown in Figure~\ref{fig:miniboone_rule_ranking}, suggest that ECLAIRE is robust to substantial pruning of its intermediate rules without loosing much of its performance. For example, one can drop about 25\% of all intermediate rules while experiencing a very small cost in fidelity; all while using less resources and generating fewer rules. Both of these results suggest that dropping a small fraction of rules can help scaling ECLAIRE to large tasks.


\begin{figure}[!htbp]
    \centering
    \begin{minipage}{0.32\linewidth}
        \centering
        \centering
        \includegraphics[width=\textwidth]{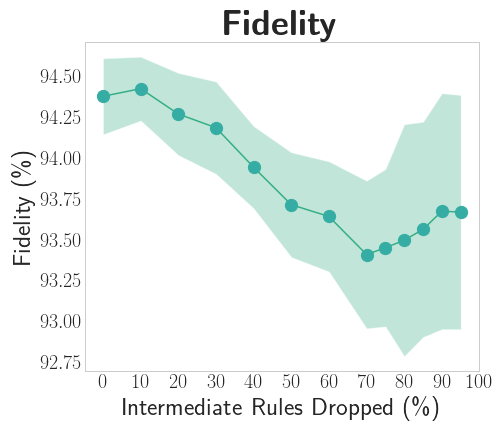}
        \label{fig:miniboone_rule_ranking_fidelity}
    \end{minipage}
    \begin{minipage}{0.32\linewidth}
        \centering
        \includegraphics[width=\textwidth]{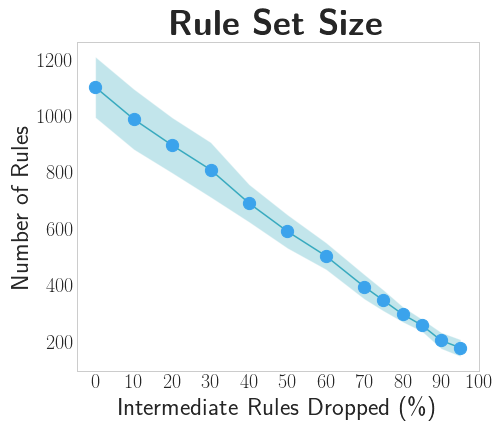}
        \label{fig:miniboone_rule_ranking_ruleset_size}
    \end{minipage}
    \begin{minipage}{0.32\linewidth}
        \centering
        \includegraphics[width=\textwidth]{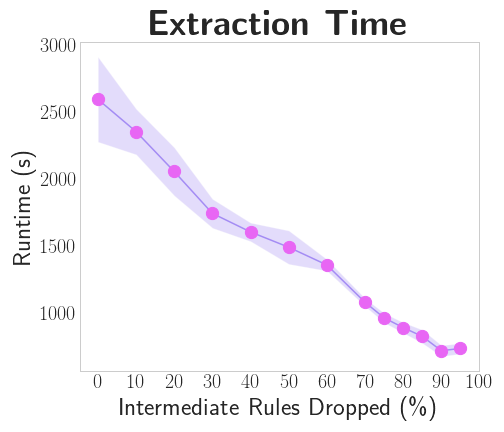}
        \label{fig:miniboone_rule_ranking_runtime}
    \end{minipage}
    \caption{Effects of dropping rules, ranked by their confidences levels, in intermediate rule sets generated by ECLAIRE in MiniBooNE (Section~\ref{sec:experimental-physics-scalability}).}
    \label{fig:miniboone_rule_ranking}
\end{figure}

\end{document}